\newcommand*\dbar[1]{\overline{\overline{\lower0.2ex\hbox{$#1$}}}}
\colorlet{darkgreen}{green!50!black}
\NewDocumentCommand{\Exp}{d() r[]}{\ensuremath{
		\mathds{E}
		\IfValueT{#1}{_{#1}}
		{\left[#2\right]}
}}
\newtheorem{theorem}{Theorem}
\newtheorem{definition}{Definition}
\newtheorem{lemma}{Lemma}
\newcommand{\method}{MHM-GNN\xspace}
\newcommand{\Appendix}{supplement\xspace}
\newcommand{\highorder}{joint $k$-node\xspace}
\newcommand{\eat}[1]{}
\newcommand{\cH}{\mathcal{H}}
\newcommand{\cC}{\mathcal{C}}
\newcommand{\cD}{\mathcal{D}}
\newcommand{\cT}{\mathcal{T}}
\newcommand{\cP}{\mathcal{P}}
\newcommand{\cI}{\mathcal{I}}
\newcommand{\cL}{\mathcal{L}}
\newcommand{\pr}{\mathbb{P}}
\newcommand{\bA}{{\bf A}}
\newcommand{\bY}{{\bf Y}}
\newcommand{\by}{{\bf y}}
\newcommand{\bW}{{\bf W}}
\definecolor{pastelblue}{RGB}{76,113,175}
\definecolor{pastelgreen}{RGB}{84,167,104}
\definecolor{pastelred}{RGB}{196,78,82}
\definecolor{pastelgrey}{RGB}{230,230,230}
\definecolor{pastelbeige}{RGB}{243,236,221}
\definecolor{pastelpurple}{RGB}{154,139,192}
\definecolor{mysalmon}{RGB}{250, 128, 114}
\definecolor{myblue}{RGB}{60,105,210}
\definecolor{mygreen}{RGB}{60,179,113}
\def\eqref#1{equation~\ref{#1}}
\def\1{\bm{1}}
\def\mA{{\bm{A}}}
\def\mM{{\bm{M}}}
\def\mT{{\bm{T}}}
\def\mX{{\bm{X}}}
\DeclareMathAlphabet{\mathsfit}{\encodingdefault}{\sfdefault}{m}{sl}
\SetMathAlphabet{\mathsfit}{bold}{\encodingdefault}{\sfdefault}{bx}{n}
\def\cC{{\mathcal{C}}}
\def\cD{{\mathcal{D}}}
\def\cH{{\mathcal{H}}}
\def\cI{{\mathcal{I}}}
\def\cL{{\mathcal{L}}}
\def\cP{{\mathcal{P}}}
\def\cT{{\mathcal{T}}}
  \def\\{}%
  \def\texttt#1{<#1>}%
\title{Unsupervised Joint $k$-node Graph Representations with Compositional Energy-Based Models}
\author{%
  Leonardo Cotta\thanks{\url{http://cottascience.github.io/}} \\
  Purdue University\\
  \texttt{cotta@purdue.edu}
  \And
  Carlos H. C. Teixeira \\
  Universidade Federal de Minas Gerais, Brazil \\
  \texttt{carlos@dcc.ufmg.br}
  \And
  Ananthram Swami \\
  United States Army Research Laboratory \\
  \texttt{ananthram.swami.civ@mail.mil}
  \And
  Bruno Ribeiro\\
  Purdue University\\
  \texttt{ribeiro@cs.purdue.edu}
}
\begin{document}

\maketitle

\begin{abstract}
  Existing Graph Neural Network (GNN) methods that learn \textit{inductive unsupervised} graph representations focus on learning node and edge representations by predicting observed edges in the graph. Although such approaches have shown advances in downstream node classification tasks, they are ineffective in jointly representing larger $k$-node sets, $k{>}2$. 
  We propose \method, an inductive unsupervised graph representation approach that combines joint $k$-node representations with energy-based models (hypergraph Markov networks) and GNNs. 
  To address the intractability of the loss that arises from this combination, we  endow our optimization with a loss upper bound using a finite-sample unbiased Markov Chain Monte Carlo estimator. 
   Our experiments show that the unsupervised \highorder representations of \method produce better unsupervised representations than existing approaches from the literature. 
\end{abstract}

\section{Introduction}

Inductive unsupervised learning using Graph Neural Networks (GNNs) in (dyadic) graphs is currently restricted to node and edge representations due to their reliance on edge-based losses~\citep{bojchevski2018deep,Hamilton2017,kipf2016variational,velickovic2018graph}. 
If we want to tackle downstream tasks that require jointly reasoning about $k>2$ nodes, but whose input data are dyadic relations (i.e., standard graphs) rather than hyperedges, we must develop techniques that can go beyond edge-based losses.

Joint $k$-node representation tasks with dyadic relational inputs include drone swarms that communicate amongst themselves to jointly act on a task~\cite{shi2020neural,taylor2019learning}, but also include more traditional product-recommendation tasks. For instance, an e-commerce website might want to predict which $k$ products could be jointly purchased in the same shopping cart, while the database only records (product, product) dyads to safeguard user information.

\citet{srinivasan2020equivalence} have recently shown that GNN node representations are insufficient to capture joint characteristics of $k$ nodes that are unique to this group of nodes. 
Indeed, our experiments show that using existing unsupervised GNN ---with their node representations and edge losses--- one cannot accurately detect these $k$-product carts on an e-commerce website. 
Unfortunately, existing GNN extensions that give \highorder representations require {\em supervised} graph-wide losses~\citep{morris2019weisfeiler,maronBSL19}, leaving a significant gap between edge and {\em supervised} whole-graph losses (i.e., we need multiple labeled graphs for these to work). The main reason for this gap is scalability: to obtain {\em true} {\em unsupervised} \highorder representations, one must optimize a model defined over {\em all} $k$-node induced subgraphs of a graph. 

Our approach  \method (Motif Hypergraph Markov Graph Neural Networks) leverages the compositionality of hypergraph Markov network models (HMNs)~\citep{rowland2017uprooting,zheleva2010higher,kohli2009robust} that allows us to define an unsupervised objective (energy-based model) over GNN representations of motifs (see upper half of \Cref{fig:diagram}).

\begin{figure*}[t!!]
	\vspace{-.18in}
	\centering
	\includegraphics[height=1.8in, width=1\linewidth]{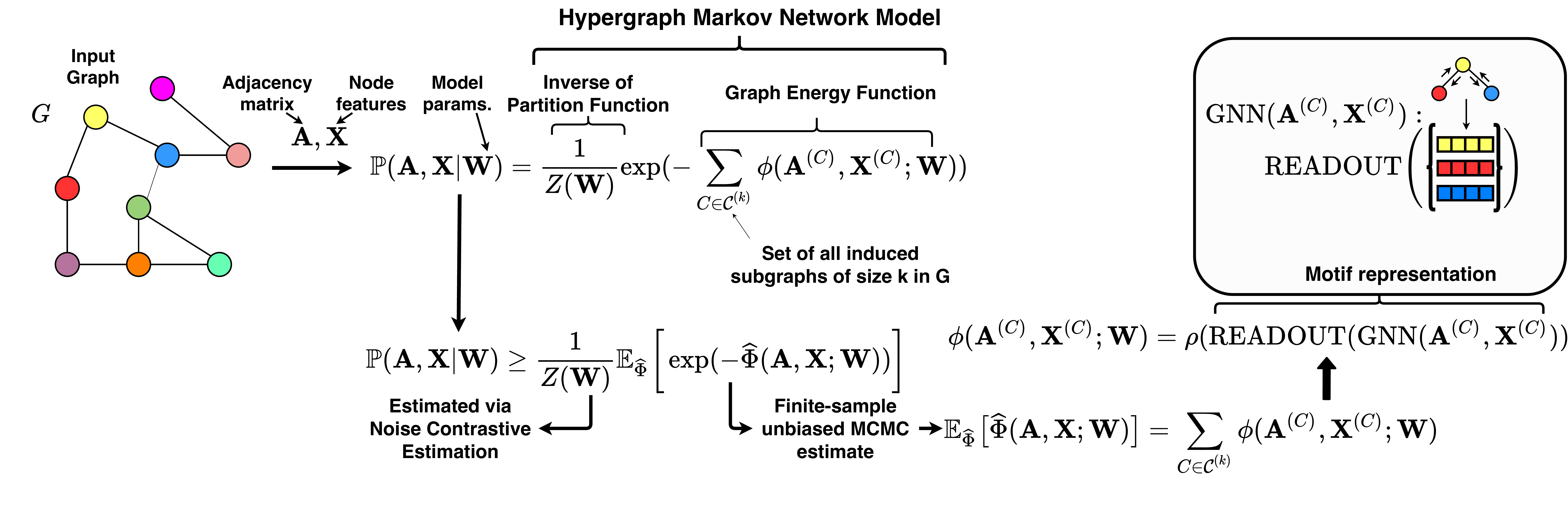}
	\vspace{-0.3in}
	\caption{The proposed unsupervised graph representation using motif compositions. Here, we present the \method model from \Cref{eq:model}, the energy estimator $\widehat{\Phi}$ from \Cref{eq:estimator}, the motif energy and representation from \Cref{eq:energy}.}\label{fig:diagram}
	\vspace{-0.06in}
\end{figure*}

Scalability is the main challenge we have to overcome, a type of scalability issue not addressed in the hypergraph Markov network literature~\citep{rowland2017uprooting,zheleva2010higher,kohli2009robust}.
First, there is the traditional likelihood intractability associated with computing the partition function $Z(\bW)$ of energy models ---$Z(\bW)$ is shown in the likelihood $\pr(\mA,\mX|\bW)$ in \Cref{fig:diagram} and also in \Cref{eq:model}.
There are standard solutions for this challenge (e.g., Noise-Contrastive Estimation (NCE)~\citep{gutmann2010noise}).
The more vexing challenge comes from the intractability created by our inductive graph representation that applies motif representations to all $k$-node subgraphs, which requires $\binom{n}{k}$ operations per gradient step, typically with $n \gg k$.
To make this step tractable, we leverage recent advances in finite-sample unbiased Markov Chain Monte Carlo estimation for sums of subgraph functions over large graphs~\citep{teixeira2018graph}.
This unbiased estimate, combined with Jensen's inequality, allows us to optimize a  lower bound on the intractable likelihood (assuming $Z(\bW)$ is known).
Fold that into the asymptotics of NCE and we get a principled, tractable optimization.

{\bf Contributions.} Our contributions are three-fold. First, we introduce \method, which produces joint $(k>2)$-node representations, where $k$ is a hyperparameter of the model. Second, we introduce a principled and scalable stochastic optimization method that learns
\method with a finite-sample unbiased estimator of the graph energy (see Fig.~\ref{fig:diagram}) and a NCE objective. Finally, we show how the joint $k$-node representations from \method produce better unsupervised \highorder representations than existing approaches that aggregate node representations.

\section{Related Work}\label{sec:rw}

In this section, we briefly review existing approaches to \textit{inductive unsupervised} representation learning of graphs, discuss existing work with higher-order graph representations and overview energy-based models. Finally, we present what in literature is not related to this work.

{\bf Edge-based graph models.} Although graph models are prominent in many areas of research~\citep{newman2018networks}, most of the proposed models, such as the initial Erd\"{o}s-Rényi model~\citep{erdos59a}, stochastic block models~\citep{holland1983stochastic} and the more recent neural network-based approaches~\citep{kipf2016variational, Hamilton2017,bojchevski2018deep} assume conditional independence of edges, resulting in what is often called an edge-based loss function. That is, all such models assume the appearance of edges in the graph is independent given the edge representations, which is usually computed via their endpoints' representations. This important conditional independence assumption appears in what we call edge-based graph models. There exist alternatives such as Markov Random Graphs~\citep{frank1986markov}, where an edge is dependent on every other edge that shares one of its endpoints, but graph models without any conditional independence assumption are still not commonly used.

{\bf Inductive unsupervised node representations with GNNs.} Recently, GraphSAGE~\citep{Hamilton2017} introduced the use of GNNs to learn inductive node representations in an unsupervised manner by applying an edge-based loss while using short random walks. There are also auto-encoder approaches~\citep{kipf2016variational,pan2018adversarially,samanta2018designing}, where one tries to reconstruct the edges in the graph using node representations. Auto-encoders also assume conditional independence of edges and can be classified as edge-based models. In contrast to edge-based loss models, DGI~\citep{velickovic2018graph} minimizes the mutual entropy between node representations and a whole-graph representation ---it does not model a probability distribution. Whilst the combination of GNNs and edge-based models has been shown to be effective in representing nodes and edges, \textit{i.e.} $k=1$ and $k=2$ representations, moving to $k>2$ joint representations requires a model with higher-order factorization. To this end, we introduce \method, a model that leverages hypergraph Markov networks and GNNs to generate $k$-node motif representations.

{\bf Joint $k$-node representations with dyadic graph.} Recently, \citet{morris2019weisfeiler} and \citet{maronBSL19} proposed higher-order neural architectures to represent entire graphs in a supervised learning setting, as opposed to the unsupervised setting discussed in this work. Moreover, we also point how since these higher-order GNN approaches are concerned with representing entire graphs in a supervised setting, the subgraph size $k$ is treated as a constant and scalability is not addressed (models already overfit with small $k$). Our approach can incorporate higher-order GNNs and also the more recent Relational Pooling framework~\citep{murphy2019relational}(see \Cref{eq:energy}). We can summarize previous efforts to represent subgraphs in an unsupervised manner as sums of the individual nodes' representations~\citep{HamiltonSurvey}. Hypergraph neural network models~\citep{yadati2019hypergcn,bai2019hypergraph,feng2019hypergraph} require observing polyadic data, while here we are interested in modeling dyadic data. We provide a broader discussion of higher-order graph models and the challenges of translating supervised approaches to an unsupervised setting in the \Appendix.

{\bf Energy-based models.} Energy-Based Models (EBMs) have been widely used to learn representations of images~\citep{ranzato2007efficient}, text~\citep{bakhtin2020energy}, speech~\citep{teh2003energy} and many other domains. In general, works in EBMs come in two flavors: what model to use for the energy and how to estimate the partition function $Z(\bW)$, which is usually intractable. For the latter, there are model-specific MCMC methods, such as Contrastive Divergence~\citep{hinton2002training} and standard solutions, such as the one we choose in this work: Noise-Constrastive Estimation (NCE). As for the energy model, we opt for a hypergraph Markov network~\citep{rowland2017uprooting,zheleva2010higher,kohli2009robust}. 
The energy of a graph is given by all of its $\binom{n}{k}$ subgraphs, which induces a new kind of intractability in the energy computation. Thus, we propose an unbiased energy estimation procedure in \Cref{sec:opt}, which provides an upper bound on our NCE objective.

{\bf Unrelated work.} 
It is important to not confuse \emph{learning inductive unsupervised \highorder} representations and other existing graph representation methods~\citep{morris2019weisfeiler,maronBSL19,grover2016node2vec,perozzi2014deepwalk,rossi2020structural,rossi2018deep,lee2019graph,yanardag2015deep}. Although motif-aware methods~\cite{lee2019graph,rossi2018deep} explicitly use motif information, they are used to build node representations rather than joint $k$-node representations, and thus, are equatable to other more powerful node representations, such as those in \citet{Hamilton2017,velickovic2018graph}. Here, we are interested in inductive tasks, hence transductive node representations, like \citet{grover2016node2vec,perozzi2014deepwalk}, are unrelated. 
Nevertheless, as a matter of curiosity, we provide results for transductive node representations in our \highorder tasks in the \Appendix, showing that our approach also works well compared to transductive settings even though our approach was not designed for transductive tasks. Supervised higher-order approaches~\citep{morris2019weisfeiler,maronBSL19} extract whole-graph representations, which cannot be directly translated to existing unsupervised settings (see \Appendix for more details on these challenges). We are interested in methods that can be used in end-to-end representation learning, thus feature engineering and extraction, such as those used in graph kernels~\citep{yanardag2015deep} are not of interest. Finally, the large body of work exploring hyperlink prediction in hypergraphs~\citep{benson2018simplicial,patil2020negative,zhang2016recovering,xu2013hyperlink,yoon2020much,zhang2018hyperlinks} requires observing polyadic data (hypergraphs) and are transductive, as opposed to our work, where we consider observing dyadic data and propose an inductive model.

\section{Motif Hypergraph Markov Graph Neural Networks (\method)}\label{sec:model}

In this section we start by introducing notation to then briefly introduce hypergraph Markov networks (HMNs), describe \method with an HMN model, and discuss possible GNN-based energy functions used to represent motifs.

{\bf Notation.} The $i$-th row of a matrix $\mM$ will be denoted $\mM_{i\cdot}$, and its $j$-th column $\mM_{\cdot j}$. For the sake of simplicity, we will focus on graphs without edge attributes, even though our model can handle them using the GNN formulation from \citet{battaglia2018relational}. We denote a graph with $n$ nodes by $G=(V,E,\mX)$, where $V$ is the set of nodes, $E \subseteq V^2$ the edge set, $\mA \in \{0,1\}^{n \times n}$ its corresponding adjacency matrix and the matrix $\mX \in \mathbb{R}^{n \times p}$ encodes the $p$ node features of all $n$ nodes. Each set of $k$ nodes from a graph $C \subseteq V : |C| = k$ has an associated induced subgraph  $G^{(C)} = (V^{(C)}, E^{(C)}, \mX^{(C)})$ (see \Cref{def:subgraphs}). Induced subgraphs are also referred to as \emph{motifs, graphlets, graph fragments or subgraphs}. Here, we will interchangeably refer to them as (induced) subgraphs or motifs.

\begin{definition}[Induced Subgraph]\label{def:subgraphs}
	Let $C \subseteq V : |C| = k$ be a set of $k$ nodes from $V$ with corresponding sorted sequence $\overrightarrow{C} = [C_1, ..., C_k] : C_i < C_{i+1}, C_i \in C \text{ } \forall i \in \{1,...,k\}  $. Then, $G^{(C)} = (V^{(C)}, E^{(C)}, \mX^{(C)})$ is the induced subgraph of $C$ in $G$, with adjacency matrix $\bA^{(C)}$, where $V^{(C)} = \{1, ..., k\}$,
	$\mA \in \{0,1\}^{k \times k} : \mA^{(C)}_{ij} = \mA_{C_i C_j}$ and $\mX^{(C)} \in \mathbb{R}^{k \times p} : \mX^{(C)}_{i\cdot} = \mX_{C_i \cdot}$.
\end{definition}

\textbf{Hypergraph Markov Networks (HMNs).} A Markov Network (MN) defines a joint probability distribution as a product of non-negative functions (potentials) over maximal cliques of an undirected graphical model~\citep{kindermann1982markov,barber2012bayesian}. Although defined over maximal
cliques, scalable techniques often assume factorization over non-maximal cliques~\citep{rowland2017uprooting,barber2012bayesian, zheleva2010higher}, such as Pairwise Markov Networks (PMNs)~\citep{hofling2009estimation}, where the distribution is expressed as a product of edge potentials. In contrast, since we are interested in  learning joint representations of $k$-node subgraphs, we need a hypergraph Markov network (HMN) (\Cref{def:hypnets}), which is an MN model that can encompass all the variables of a $(k>2)$-node subgraph.

Our graph model is an HMN.
HMNs are to PMNs what hypergraphs are to graphs. In HMNs, the joint distribution is expressed as a product of potentials of hyperedges rather than edges. Since in HMNs potentials are defined over subsets of random variables of any size, we have the flexibility to do it over $k$-node subgraphs. There are previous works referring to HMNs as higher-order graphical models~\citep{rowland2017uprooting,zheleva2010higher}, however we find the hypergraph analogy more clarifying.
Next, we provide a formal definition of HMNs.

\begin{definition}[Hypergraph Markov Networks (HMNs)]\label{def:hypnets}
A hypergraph Markov network is a Markov network where the joint probability distribution of $ \bY = \{Y_1, ..., Y_l\}$ can be expressed as $\pr(\bY = \by) = \frac{1}{Z}  \Pi_{h \in \cH} \phi_h(\by_h)$, where $Z$ is the partition function $Z=\sum_{\by' \in \bY} \Pi_{h \in \cH} \phi_h(\by'_h)$, $\phi_h(\cdot) \geq 0$ are non-negative, $\cH \subseteq \cP(\bY) \backslash \{\emptyset\} $, where $\cP(\bY)$ is the powerset of a set $\bY$,
and $\cH$ is the set of hyperedges in the Markov network, $\bY_h$ are the random variables associated with hyperedge $h$ and $\by,\by_h$ assignments of $\bY$ and $\bY_h$ respectively. 
Finally, an energy-based HMN assumes strictly positve potentials, resulting in the model $\pr(\bY = \by) = \frac{1}{Z}  \Pi_{h \in \cH} \exp(-\phi_h(\by_h)) = \frac{1}{Z}  \exp( - \sum_{h \in \cH} \phi_h(\by_h))$, where $\phi_h(\cdot)$ is called the energy function of $h$.
\end{definition}

\subsection{\method{}s}\label{sec:method}

\vspace{-0.02in}

We model $\pr(\bA, \mX | \bW)$ with an energy-based HMN, as described in Definition \ref{def:hypnets}, where a hyperedge corresponds to an induced subgraph of $k$ nodes in the graph $G$. More precisely, for every set of $k > 1$ nodes in the graph $C \subseteq V, |C| = k $, we define a hyperedge $h = \{\bA_{ij} : (i,j) \in C^2\} \cup \{\mX_{i,\cdot} : i \in C\}$ in the HMN to encompass every node variable in the $k$-node set and every edge variable with both endpoints in it. A hyperedge can be indexed by a set of nodes $C$, since its corresponding set of random variables is given by the features $\mX^{(C)}$ and the adjacency matrix $\bA^{(C)}$ of the subgraph induced by $C$, following Definition \ref{def:subgraphs}. Thus, a graph with $n$ nodes will have an HMN with $\binom{n}{k}$ potentials. We formally define the model in Definition \ref{def:model}.

\begin{definition}[\method]\label{def:model}
	
	Let $\cC^{(k)}$ denote the set of all $\binom{n}{k}$ combinations of $k$ nodes from $G$. We define a hypergraph Markov Network with a set of hyperedges $\{\{\mA_{ij} : (i,j) \in C \} \cup \{ \mX_{i,\cdot} : i \in C \} \text{ } : \text{ } C \in \cC^{(k)} \}$, which following Definitions \ref{def:subgraphs} and \ref{def:hypnets}, entails the model
	\vspace{-0.02in}
	\begin{equation}\label{eq:model}
	\!\prob(\bA ,\! \mX | \bW)\! =\! \frac{\exp \! \big(\! -\! \sum_{ C \in \cC^{(k)}} \phi( \bA^{(C)},\! \mX^{(C)};\! \bW ) \big)}{Z(\bW)},
	\vspace{-0.02in}
	\end{equation}
	where $\phi(\cdot,\cdot;\bW)$ is an energy function with parameters $\bW$ and $Z(\bW)$ is the \textit{partition function} given by $Z(\bW) = \sum_{n=1}^{\infty} \sum_{ \bA' \in \{0,1\}^{ n \times n}} \int_{\mX' \in \mathbb{R}^{n \times p}} $ $\exp( - \sum_{ C \in \cC^{(k)}} \phi( \bA'^{(C)}, \mX'^{(C)}; \bW)) d\mX'$.
	
\end{definition}

Although \method factorizes the total energy of a graph, the model does not assume any conditional independence between edge variables for $k>3$. For $k=2$, the model recovers existing edge-based models and for $k=3$ edge variables are dependent only on edges that share one of their endpoints, recovering the Markov random graphs class~\citep{frank1986markov}. Furthermore, \method will learn a jointly exchangeable distribution~\citep{orbanz2014bayesian} if the subgraph energy function $\phi(.,.;\bW)$ is jointly exchangeable, such as a GNN. 
In the \Appendix we connect \method  assumptions, exchangeability and Exponential Random Graph Models (ERGMs).

{\bf Subgraph energy function and representations.} As mentioned, to have a jointly exchangeable model with \method, we need an energy function $\phi( \bA^{(C)}, \mX^{(C)}; \bW)$ that is jointly exchangeable with respect to the subgraph $G^{(C)}$. To this end, we break down $\phi( \bA^{(C)}, \mX^{(C)}; \bW)$ into a composition of two functions. First, we compute a jointly exchangeable representation of $G^{(C)}$, then we use it as input to a more general function that assigns an energy value to the subgraph. Following recent GNN advances~\citep{duvenaud2015convolutional,xu2018how,morris2019weisfeiler}, we define the subgraph representation with a permutation invariant ($\text{READOUT}$) function over the nodes' representations given by a GNN, denoted by $h^{(C)}(\bA^{(C)}, \mX^{(C)}; \bW_{\text{GNN}}, \bW_{\text{R}}) = \text{READOUT}(\text{GNN}(\bA^{(C)},\mX^{(C)};\bW_{\text{GNN}}) ;\bW_{\text{R}})$. Usually, the $\text{READOUT}$ function is a row-wise sum followed by a multi-layer perceptron. Note that, although we choose a 1-GNN approach to represent the subgraph here, any jointly exchangeable graph representation can be used to represent the subgraph, such as $k$-GNNs~\citep{morris2019weisfeiler} and Relational Pooling~\citep{murphy2019relational}.

Finally, we can define the energy of a subgraph $G^{(C)}$ as
\begin{equation}
\label{eq:energy}
\begin{split}
\phi(\bA^{(C)}, \mX^{(C)}; \bW) = \bW_{\text{energy}}^T 
 \rho(h^{(C)}(\bA^{(C)}, \mX^{(C)}; \bW_{\text{GNN}}, \bW_{\text{R}});\bW_{\rho})
\end{split}
\end{equation}
where the model set of weights is $\bW = \{ \bW_{\text{energy}}, \bW_{\text{R}}, \bW_{\rho}, \bW_{\text{GNN}}\}$, $\rho(\cdot;\bW_{\rho})$ is a permutation sensitive function with parameters $\bW_{\rho}$ such as a multi-layer perceptron with range in $\mathbb{R}^{1 \times H}$ and $\bW_{\text{energy}} \in \mathbb{R}^{1 \times H}$ is a (learnable) weight matrix.

Although the functional form of the distribution and subgraph representations are properly defined, directly computing both the partition function and the total energy of a graph are computationally intractable for an arbitrary $k$. Therefore, in the next section we discuss how to properly learn the distribution parameters, providing a principled and scalable approximate method.

\vspace{-0.08in}

\section{Learning \method{}s} \label{sec:opt}

In this section, we first define our unsupervised objective through Noise-Contrastive Estimation (NCE) and then show how to approximate it.

{\bf Noise-Contrastive Estimation (NCE).} Since directly computing $Z(\bW)$ of \Cref{eq:model} is intractable, we use
Noise-Contrastive Estimation (NCE)~\cite{gutmann2010noise}. 
In NCE, the model parameters are learned by contrasting observed data and negative (noise) sampled examples. 
 Given the set $\cD_{\text{true}}$ of observed graphs and $M|\cD_{\text{true}}|$ sampled noise graphs from a noise distribution $\pr_n(\bA , \mX)$ composing the set $D_{\text{noise}}$, we can define the loss function to be minimized as
\vspace{-0.05in}
\begin{align*} 
 \cL(\bA,\mX;\bW) =
- {\sum_{\bA \in \cD_{\text{true}}}} {\log} ( \hat{y}(\Phi(\bA,\mX;\bW),\pr_n(\bA,\mX)) ) \\ - {\sum_{\bA \in \cD_{\text{noise}}}} \log ( 1 - \hat{y}(\Phi(\bA,\mX;\bW),\pr_n(\bA,\mX)) ). 
\label{eq:loss}
\end{align*}

with $\hat{y}(\Phi(\bA,\mX;\bW),\pr_n(\bA,\mX)) =  \sigma(-\Phi(\bA,\mX;\bW) - \log(M\pr_n(\bA,\mX)))$, where $\sigma(\cdot)$ is the sigmoid function and 
$
\Phi(\bA,\mX;\bW) = \sum_{ C \in \cC^{(k)}} \phi( \bA^{(C)}, \mX^{(C)}; \bW )$
denotes the total energy of a graph $G=(V,E,\mX)$ in \method.

If the largest graph in $\cD_{\text{true}} \cup \cD_{\text{noise}}$ has $n$ nodes, directly computing the gradient of the loss $\nabla \cL(\bA,\mX;\bW)$ would take $\mathcal{O}(M|\cD_{\text{true}}|^2 n^{k})$ operations. Traditional Stochastic Gradient Descent (SGD) methods get rid of the dataset size $M|\cD_{\text{true}}|^2$ term by uniformly sampling graph examples. Thus, naively optimizing the NCE loss with SGD would still require $\mathcal{O}(n^{k})$ operations to compute $\Phi(\bA,\mX;\bW)$. 
In what follows we rely on a stochastic optimization procedure that requires a
finite-sample  unbiased estimator of $\Phi(\bA,\mX;\bW)$, where we can also control the estimator's variance with a hyperparameter. 
We show that the resulting stochastic optimization is theoretically sound by proving that it optimizes an upper bound of the original loss.

{\bf Estimating the \method energy $\Phi(\textbf{A},
	\texorpdfstring{\bm{X}} \text{;} \textbf{W})$.} To estimate $\Phi(\bA,\mX;\bW)$, we need to first observe that ---due to sparsity in real-world graphs--- an arbitrary set of $k$ nodes from a graph will induce an empty subgraph with high probability~\citep{newman2018networks}. Therefore, to estimate $\Phi(\bA,\mX;\bW)$ with low variance, we focus on estimating it on \textit{connected induced subgraphs} (CISes)~\citep{teixeira2018graph}, while assuming some constant high energy for disconnected subgraphs. To this end, if $\cC^{(k)}_{\text{conn}}$ is  the set of all $k$-node sets that induce a connected subgraph in $G$, we are now making the reasonable assumption 
\begin{equation}\label{eq:PhiConn}
\Phi(\bA,\mX;\bW) = \sum_{C \in \cC^{(k)}_{\text{conn}}} \phi( \bA^{(C)}, \mX^{(C)}; \bW ) + \text{constant},
\end{equation}
where w.l.o.g.\ we assume the constant to be zero.
Since enumerating all CISes is computationally intractable for arbitrary $k$~\citep{bressan2017counting}, we introduce next a finite-sample unbiased estimator for $\Phi(\bA,\mX;\bW)$ of \Cref{eq:PhiConn} over CISes, denoted by $\widehat{\Phi}(\bA,\mX;\bW)$.

We start by presenting the concept of the higher-order network ($k$-HON) of a graph $G$ 
and its variant called {\em collapsed node} HON ($k$-CNHON).  
An ordinary $k$-HON $G^{(k)}$
is a network where the nodes $V^{(k)}$ 
correspond to $k$-node CISes from $G$ and edges $E^{(k)}$ connect two CISes that share $k-1$ nodes. On the other hand, a $k$-CNHON or $G^{(k,\cI)}$ is a multigraph where a subset of the nodes of  $G^{(k)}$, $\cI \subset V^{(k)}$,
are collapsed into a single node in $G^{(k,\cI)}$.
The collapsed node, henceforth denoted {\em the supernode}, is now node $v^{(k)}_\cI$ in $G^{(k,\cI)}$.
The edges in $G^{(k)}$ of the collapsed nodes $v \in \cI$ among themselves, i.e., the edges in $\cI \times \cI$, do not exist in $G^{(k,\cI)}$.
The edges between the collapsed nodes $v \in \cI$ and other nodes $V \setminus \cI$ are added to $G^{(k,\cI)}$ by replacing the endpoint $v$ with endpoint $v^{(k)}_\cI$, making $G^{(k,\cI)}$ a multigraph (a graph with multiple edges between the same two nodes).
All the remaining edges in $G^{(k)}$ are preserved in $G^{(k,\cI)}$. In \Cref{fig:rwt} we show a graph and its $k$-CNHON with a Random Walk Tour (\Cref{def:rwt}) example.
A formal definition is given in \Appendix.

\begin{definition}[Random Walk Tour (RWT)]\label{def:rwt}
	Consider a simple random walk over a multigraph starting at node $v_{\text{init}}$. A Random Walk Tour (RWT) 
	is represented by a sequence of nodes 
	$\cT = \{v_1, ..., v_{t},v_{t+1}\}$ visited by the random walk such that $v_1 = v_{\text{init}}$, $v_{t+1}=v_{\text{init}}$ and $v_i \neq v_{\text{init}} \text{ } \forall \text{ } 1<i<t+1$.
\end{definition}
In this work, we construct the estimator $\widehat{\Phi}(\bA,\mX;\bW)$ via \textit{random walk tours} (RWTs) on the $k$-CNHON $G^{(k,\cI)}$ starting at the collapsed node $v^{(k)}_\cI$ (i.e,  $v_{\text{init}}=v^{(k)}_\cI$ in \Cref{def:rwt}). 
As previously introduced and discussed in \citet{Avrachenkov2016} and \citet{teixeira2018graph}, increasing the number of tours and the supernode size allow for variance reduction. 
Using these insights, we propose the estimator $\widehat{\Phi}(\bA,\mX;\bW)$, whose properties are defined in \Cref{thm:estimator}.

\begin{theorem}\label{thm:estimator}
	Let $G^{(k)}$ be the $k$-HON of a  graph $G$, a set $\cI$ of $k$-node sets that induce CISes in $G$ (as described above)
	and  $N^{(k)}(C)$ the set of neighbors of the corresponding node of CIS $C$ in $G^{(k)}$. %
	In addition, consider the sample-path $\cT^r = ( v^r_{1}, ..., v^r_{{t}^r}, v^r_{t^r+1})$ visited by the $r$-th RWT on $G^{(k,\cI)}$ starting from supernode 
	$v^{(k)}_\cI$, 
	where $v^r_{i}$ is the node reached at step $i$  for $1 \leq r \leq q$ (\Cref{def:rwt}), and $q \geq 1$ is the number of RWTs. 
	Since $\cT^r$ is a RWT, $v^r_1 = v^{(k)}_\cI$, $v^r_{t^r+1} = v^{(k)}_\cI$ and $v^r_i \neq v^{(k)}_\cI : 1 < i < t^r+1$.
	The nodes $( v^r_{2}, ..., v^r_{{t}^r})$ in the sample path $\cT^r$ have a corresponding sequence of induced $k$-node subgraphs in the graph $G$, denoted $\cT_C^r = (C^r_{i})_{i=2}^{t^r}$. 
	Then, the estimator
	\begin{equation}\label{eq:estimator}
	 \widehat{\Phi}(\bA,\mX;\bW) {=}  {\underbrace{{\sum_{v \in \cI}} \phi{( \bA^{(v)}, \mX^{(v)}; \bW)}}_{\text{Energy of $k$-node CISes in } \cI \text{ (supernode)}}}
	{+} \underbrace{\Big(\frac{\sum_{u \in \cI}|N^{(k)}(u) \backslash \cI|}{q}\Big)  \sum_{r=1}^{q} \sum_{i=2}^{t^r} \frac{\phi( \bA^{(C^r_i)}, \mX^{(C^r_i)}; \bW )}{|N^{(k)}(C^r_i)|} }_{\text{RWT-estimated energy of remaining $k$-node CISes in } G} \nonumber
	\end{equation}
	is an unbiased and consistent estimator of  $\Phi(\bA,\mX;\bW)$ in \Cref{eq:PhiConn} with constant=0.
\end{theorem}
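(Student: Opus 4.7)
The plan is to reduce the theorem to a classical identity for random walk tours on (multi)graphs, applied to the multigraph $G^{(k,\cI)}$. First, I would observe that the sum splits cleanly into two disjoint pieces: the CISes that are collapsed into the supernode (i.e., $C \in \cI$), and the remaining CISes (the non-supernode vertices of $G^{(k,\cI)}$). The first piece, $\sum_{v \in \cI} \phi(\bA^{(v)},\mX^{(v)};\bW)$, appears deterministically in $\widehat{\Phi}$, so any bias/consistency question reduces to the random-walk-tour term, which must estimate $\sum_{C \in \cC^{(k)}_{\text{conn}} \setminus \cI} \phi(\bA^{(C)},\mX^{(C)};\bW)$.

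Next, I would invoke the standard regenerative identity for simple random walks on a connected (multi)graph: if $\pi$ is the stationary distribution of the simple random walk on $G^{(k,\cI)}$ (so $\pi(u) \propto \deg_{G^{(k,\cI)}}(u)$), and $\tau^+$ is the first return time to a starting vertex $v_{\text{init}}$, then for any function $f$ with $f(v_{\text{init}})=0$,
\begin{equation*}
\Exp(v_{\text{init}})[\sum_{n=1}^{\tau^+-1} f(X_n)] \;=\; \frac{\sum_u f(u)\,\deg_{G^{(k,\cI)}}(u)}{\deg_{G^{(k,\cI)}}(v_{\text{init}})}.
\end{equation*}
Setting $v_{\text{init}} = v^{(k)}_\cI$, $f(C) = \phi(\bA^{(C)},\mX^{(C)};\bW)/|N^{(k)}(C)|$ for non-supernode vertices, and using that the multigraph degree of the supernode equals $\sum_{u \in \cI}|N^{(k)}(u)\setminus \cI|$ (edges internal to $\cI$ are deleted when collapsing), the identity immediately yields
\begin{equation*}
\Big(\sum_{u \in \cI}|N^{(k)}(u)\setminus \cI|\Big)\;\Exp[\sum_{i=2}^{t^r} \frac{\phi(\bA^{(C^r_i)},\mX^{(C^r_i)};\bW)}{|N^{(k)}(C^r_i)|}] \;=\; \sum_{C \in \cC^{(k)}_{\text{conn}}\setminus \cI}\phi(\bA^{(C)},\mX^{(C)};\bW),
\end{equation*}
which is exactly the unbiasedness claim after combining with the deterministic supernode term. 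Averaging over the $q$ tours preserves unbiasedness, since by the strong Markov property the tours $\cT^1,\ldots,\cT^q$ are i.i.d.

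Finally, consistency would follow from applying the strong law of large numbers to the i.i.d.\ per-tour statistics $S^r = \sum_{i=2}^{t^r}\phi(\bA^{(C^r_i)},\mX^{(C^r_i)};\bW)/|N^{(k)}(C^r_i)|$, whose first moment is finite because $G$ has finitely many $k$-node CISes and $\phi$ is bounded on this finite set for fixed $\bW$; so $(1/q)\sum_r S^r \to \Exp[S^1]$ almost surely as $q\to\infty$, giving $\widehat{\Phi}\to\Phi$ a.s.

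The main obstacle, and the only step requiring real care, is the bookkeeping around the multigraph collapsing: one must verify that (i) $G^{(k,\cI)}$ is connected whenever the $k$-HON restricted to the support of $\phi$ is connected (otherwise the simple random walk does not mix, and tours cannot reach all CISes), (ii) the edge multiplicities induced by collapsing yield the claimed supernode degree $\sum_{u\in\cI}|N^{(k)}(u)\setminus\cI|$ with no double counting of within-$\cI$ edges, and (iii) the index alignment between the tour length $t^r$ and the first return time $\tau^+$ (so that the interior indices $i=2,\ldots,t^r$ correspond exactly to the $\tau^+-1$ intermediate visits in the classical identity). Once these three points are pinned down, the proof is a direct substitution into the regenerative formula followed by SLLN.
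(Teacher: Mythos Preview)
Your proposal is correct and follows essentially the same route as the paper: split off the deterministic supernode contribution, then show the per-tour RWT term is unbiased for $\sum_{C\in\cC^{(k)}_{\text{conn}}\setminus\cI}\phi(\bA^{(C)},\mX^{(C)};\bW)$ via the regenerative identity for the simple random walk on $G^{(k,\cI)}$, and finish with i.i.d.\ averaging via the strong Markov property. The paper derives the regenerative identity through the renewal reward theorem together with Kac's formula ($\mathbb{E}[t^r]=1/\pi(v^{(k)}_\cI)$) and the explicit degree-proportional stationary distribution, whereas you invoke the identity directly; these are the same argument packaged differently. Your treatment is in fact slightly more complete, since the paper's proof sketch does not spell out the SLLN step for consistency, which you do.
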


The proof of \Cref{thm:estimator} is in the \Appendix.

We can now replace $\Phi(\bA,\mX;\bW)$ in $\cL(\bA,\mX;\bW)$ with its estimator $\widehat{\Phi}(\bA,\mX;\bW)$, resulting in a loss estimate $\widehat{\cL}(\bA,\mX;\bW)$. It follows from \Cref{thm:estimator} and Jensen's inequality that our loss estimate is in expectation an upper bound to the true NCE loss, \textit{i.e.} $\E_{\widehat{\Phi}}[\widehat{\cL}(\bA,\mX;\bW)] \geq \cL(\bA,\mX;\bW)$. Moreover, note that using an estimator of this nature in higher-order GNNs, such as $k$-GNNs~\citep{morris2019weisfeiler}, does not allow for a bound in the loss estimation (please, see  the \Appendix for further discussion).
Note that the variance of $\widehat{\Phi}$ is controlled by the hyperparameter $q$, the number of tours.
\vspace{-0.5em}

\begin{figure}
\centering
\begin{subfigure}{.4\textwidth}
  \centering
  \includegraphics[width=.4\linewidth]{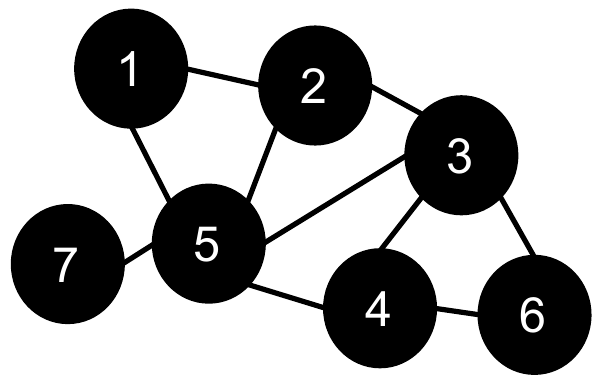}
  \caption{Original graph.}
  \label{fig:sub1}
\end{subfigure}%
\begin{subfigure}{.7\textwidth}
  \centering
  \includegraphics[width=.7\linewidth]{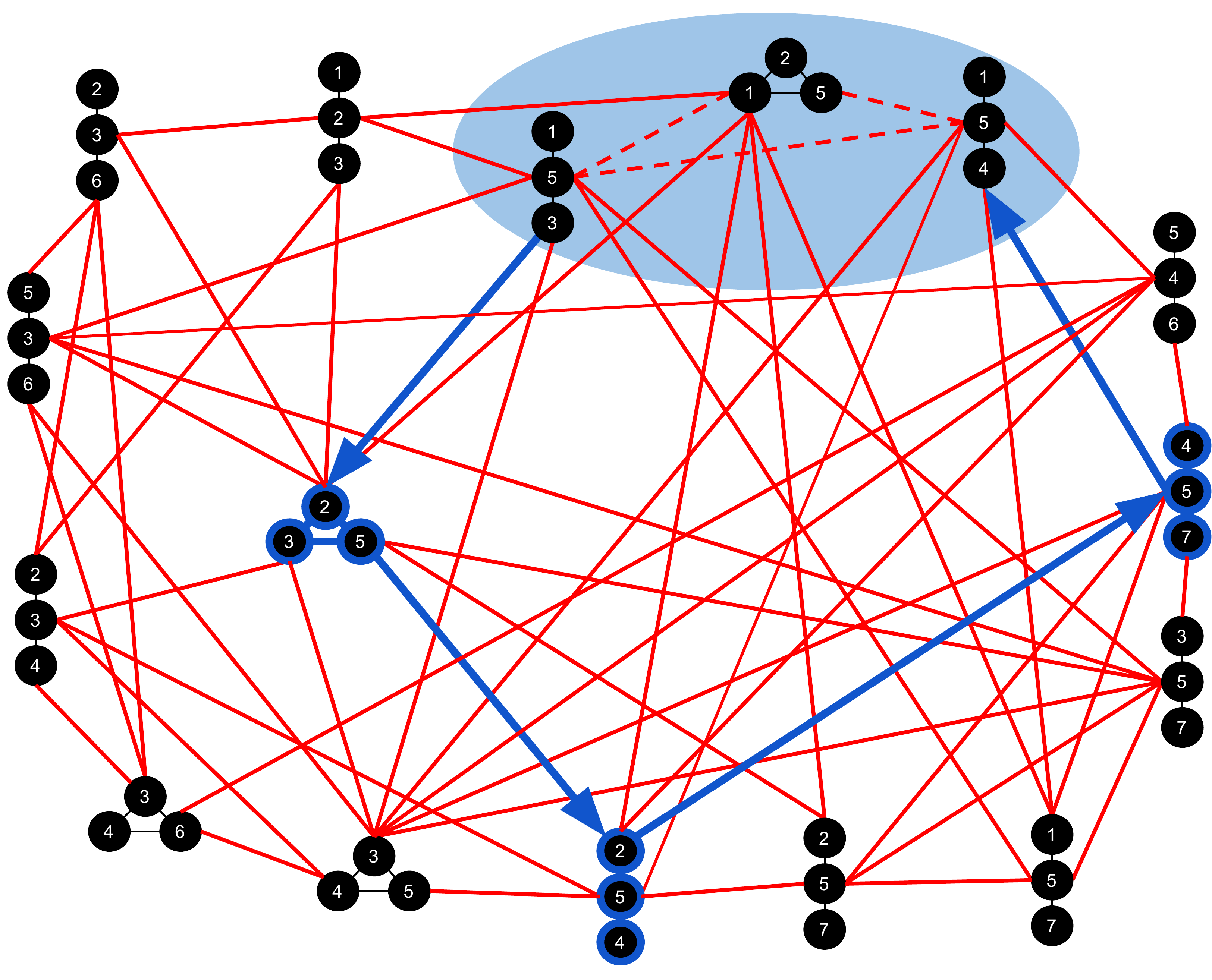}
  \caption{$k$-CNHON with a supernode of size 3 highlighted in blue and an RWT example. Dashed red edges exist in the $k$-HON but are removed in the $k$-CNHON.}
  \label{fig:sub2}
\end{subfigure}
\caption{A graph and its corresponding $k$-CNHON with an RWT example.}
\vspace{-1em}
\label{fig:rwt}
\end{figure} %
\section{Results} \label{sec:exp}
\vspace{-0.5em}
In this section, we evaluate the quality of the unsupervised motif representations learned by \method over six datasets using two \highorder transfer learning tasks. The tasks consider three citation networks, one coauthorship network and two product networks to show how the pre-trained motif representations consistently outperform pooling pre-trained node representations in predicting $k$-node hidden hyperedge labels in downstream tasks --- details of these tasks are in the {\em Hyperedge Detection} and {\em DAG Leaf Counting} subsections.

A good $k$-node representation of a graph is able to capture hidden $k$-order relationships while only observing pairwise interactions. To this end, our tasks evaluate the quality of the unsupervised representations using two hidden hyperedge label prediction tasks. Using the pre-trained unsupervised learned representation as input, we train a simple logistic regression classifier to predict the hidden hyperedge label of a $k$-node set.

{\bf Datasets.} We use the Cora, Citeseer and Pubmed~\citep{sen2008collective} citation networks, the DBLP coauthorship network~\citep{yadati2019hypergcn}, the Steam~\citep{pathak2017generating} and the Rent the Runway~\citep{misra2018decomposing} product networks (more details about the datasets are in the \Appendix).
These datasets were chosen since they contain \highorder information. In the coauthorship network, nodes correspond to authors and edges to the coauthorship of a paper, hidden from the training data we also have the papers and their corresponding author list. In the product networks, nodes correspond to products and an edge exists if the same user bought the two end-point products, hidden from the training data we have the list of products each user bought. In the citation network, nodes correspond to papers and edges to citations, hidden from the training data we also have the direction in which the citation occurred. These directions, paper author list and users purchase history which are hidden in the training data used by the unsupervised GNN and \method representations, give us two transfer learning {\em $k$-node downstream tasks}, described in what follows.

{\bf Hyperedge Detection.} This hyperedge task, inspired by~\citet{yadati2019hypergcn}, creates a $k$-node hyperedge in a citation network whenever a paper cites $k-1$ other papers, in a coauthorship network whenever $k$ authors write a paper together and in a product network whenever a user buys $k$ products. Examples are in the citation networks $k$-size subgraphs with at least one node with degree $k-1$ and $k$-cliques in the other networks.
    Note how \citet{yadati2019hypergcn} directly learns its representations from the hypergraph, a significantly easier task.
    The downstream classifier ---a simple logistic regression classifier--- uses the unsupervised pre-trained representations to classify whether a set of $k$ nodes forms a (hidden) hyperege or not.
    This task allows us to compare the quality of the unsupervised node representations of GNNs against that of \method.
    
{\bf DAG Leaf Counting.} 
   This task considers the  citation networks. Again, baselines and \method are trained over the undirected graphs. %
   Due to the temporal order of citations, subgraphs correspond to Directed Acyclic Graphs (DAGs) in the directed structure. For a connected $k$-node induced subgraph in the directed graph, we want to predict the number of leaves of the resulting DAG. %
    Again, the downstream classifier ---a simple logistic regression classifier--- uses the unsupervised pre-trained representations of a set of $k$-nodes to predict the exact number of leaves formed by the (hidden) $k$-node DAG.
    The number of leaves defines the number of influential papers in the $k$-node set.

{\bf \method architecture.} The energy function of \method is as described in \Cref{eq:energy}, where we use a one-hidden layer feedforward network with LeakyReLU activations as $\rho$, a row-wise sum followed by also a one-hidden layer feedforward network with LeakyReLU activations as the $\text{READOUT}$ function and a single layer GraphSAGE-mean~\citet{Hamilton2017} as the GNN.

{\bf Training the model.} Since the datasets used in this section contain only one large graph for training ---as in most of the real-world graph datasets--- we need to construct a larger set of positive examples $\cD_{\text{true}}$ to learn the distribution $\pr(\bA , \mX | \bW)$. One way to overcome this issue is by subsampling the original large graph. While sampling smaller graphs that preserve the original graph properties, we can approximate the true $\pr(\bA , \mX | \bW)$ distribution and control the complexity of $\widehat{\Phi}(\bA,X;\bW))$ (since tour return times are affected by the size of the graph). To this end, we construct $\cD_{\text{true}}$ by subsampling the original graph with Forest Fire~\citep{leskovec2006sampling}. As for the noise distribution, we turn to the one used by \citet{velickovic2018graph}, where for each positive example we generate $M$ negative samples by keeping the adjacency matrix and shuffling the feature matrix. This noise distribution allows us to keep structural properties of the graph, \textit{e.g.} connectivity, while significantly changing how node features affect the distribution. We precisely describe all hyperparameters and hyperparameter tuning in the \Appendix.

{\bf Experimental setup.} To evaluate the performance of the pre-trained \method representations in the above downstream tasks, we first train the model accordingly for $k=3$ and $k=4$ motif sizes over all six datasets. In the citation and coauthorship networks, we have a single graph, thus these tasks require dividing the graph into training and test sets when evaluating the representations, such that the distribution of observed subgraphs is preserved. To this end, for each dataset, we perform min-cut clustering and use the two cuts for training and test data in the downstream task. For the product networks, to explore the inductive nature of our method, we create two graphs, one for training the models and one for testing the representations. For the Steam dataset, we train on the user-product data from 2014 and test considering the data from 2015. Similarly, for the Rent the Runway dataset, we train on data from 2016 and test on data from 2017. \Cref{tab:new-hyp3,tab:new-hyp4} show our results for the Hyperedge Detection and \Cref{tab:dag} for the DAG Leaf Counting tasks.
\method uses motif sizes $k=3,4$. In the \Appendix, we also show results for $k=5$.
For each task (and $k$), we report the mean and the standard deviation of the balanced accuracy (mean recall of each class) achieved by logistic regression over five different runs. Furthermore, the pre-trained representations (baselines and our approach) have dimension 128. Additional implementation details and hyperparameter search can be found in the \Appendix.

\vspace{-0.em}
{\bf Baselines.} We evaluate the motif representations from \method against two alternatives representing the $k$ nodes using state-of-the-art unsupervised GNN representations: GraphSAGE~\citep{Hamilton2017} and Deep Graph Infomax (DGI)~\citep{velickovic2018graph}.
As a naive baseline, we compare against summing the original features from the nodes, \textit{i.e.} a representation that ignores structural information. 
Moreover, we compare our pre-trained \method representations with an untrained (random parameters) version of \method. Further, since the citation and coauthorship networks consider single graphs, in the \Appendix we show results for these datasets with two prominent transductive node embedding methods~\citep{perozzi2014deepwalk,grover2016node2vec}, evidencing how even in transductive settings node embeddings fail to capture \highorder relationships.

{\bf Results.} The hidden hyperedge downstream tasks are designed to better understand how well pre-trained unsupervised representations can capture \highorder properties. A good \highorder representation should be able to disentangle (hidden) polyadic relationships, even though they only have access to dyadic data. In our Hyperedge Detection task using pre-trained unsupervised representations, \Cref{tab:new-hyp3,tab:new-hyp4} show that \method representations consistently outperform GNN node representations across all datasets. In particular, \method increases classification accuracy by up to 11\% over the best-performing baseline. The results of our the DAG Leaf Counting task, shown in \Cref{tab:dag}, reinforce that pre-trained unsupervised  \method  representations can better capture \highorder interactions.
In particular, \method representations observe classification accuracies by up to 24\% in this downstream task.

\textit{Node representations and \highorder graph tasks.} Our experiments further validate the theoretical claims in  \citet{srinivasan2020equivalence}, that structural node representations are not capable of performing \highorder tasks. That is, the inductive node representations baselines perform similarly to a random classifier in most settings in \Cref{tab:new-hyp3,tab:new-hyp4,tab:dag}. In contrast, the greater accuracy of \method shows that \highorder representations are informative.
 
\textit{Ablation study.} 
As an ablation, we test whether our optimization in \method{} improves the unsupervised \highorder representations, when compared against random neural network weights.
And while \Cref{tab:new-hyp3,tab:new-hyp4,tab:dag} show that \method with random weights perform well in the tasks, since they are effectively a type of motif feature,
the higher accuracy of the optimized \highorder representations shows that the optimized representations in \method are indeed learned.

\begin{table*}
	\centering
	\caption{Balanced accuracy for the \textbf{Hyperedge Detection} task over subgraphs of size $k=3$. We report mean and standard deviation over five runs.}
	\scalebox{0.7}{
	\begin{tabular}{*{1}{l}*{6}{c}}
		\textbf{Method} &  \multicolumn{1}{c}{\textbf{Cora}} & \multicolumn{1}{c}{\textbf{Citeseer}} & \multicolumn{1}{c}{\textbf{Pubmed}} & \multicolumn{1}{c}{\textbf{DBLP}} & \multicolumn{1}{c}{\textbf{Steam}} & \multicolumn{1}{c}{\textbf{Rent the Runway}} \\
		
		{}   & $k=3$  & $k=3$ & $k=3$  & $k=3$  & $k=3$  & $k=3$ \\
		
		\toprule
		
	GS-mean$^\text{\citep{Hamilton2017}}$
	& 0.490 $\pm$ 0.03
	& 0.509 $\pm$ 0.07
	& 0.499 $\pm$ 0.00
	& 0.560 $\pm$ 0.08
	& 0.565 $\pm$ 0.01
	& 0.665 $\pm$ 0.00
	\\	
	GS-max$^\text{\citep{Hamilton2017}}$
   & 0.486 $\pm$ 0.04 
   & 0.493 $\pm$ 0.06
   &  0.498 $\pm$ 0.00
   & 0.569 $\pm$ 0.06
   & 0.579 $\pm$ 0.02
   & 0.667 $\pm$ 0.00
\\
	GS-lstm$^\text{\citep{Hamilton2017}}$    
	& 0.483 $\pm$ 0.04
	& 0.486 $\pm$ 0.05
	& 0.510 $\pm$ 0.02
	& 0.585 $\pm$ 0.06
	& 0.518 $\pm$ 0.01
	& 0.518 $\pm$ 0.01
	\\
	DGI$^\text{\citep{velickovic2018graph}}$
	& 0.487 $\pm$ 0.03
	& 0.508 $\pm$ 0.07
	&  0.509 $\pm$ 0.02
	& 0.497 $\pm$ 0.00
	& 0.588 $\pm$ 0.01
	& 0.612 $\pm$ 0.00
	\\
	Raw Features
	& 0.499 $\pm$ 0.00 
	& 0.588 $\pm$ 0.00 
	& 0.502 $\pm$ 0.00
	& 0.518 $\pm$ 0.00
	& 0.534 $\pm$ 0.00
	& 0.649 $\pm$ 0.00
	\\  \cmidrule(lr){1-1}
	\method (Rnd) 
	& 0.498 $\pm$ 0.00
	& 0.520 $\pm$ 0.05
	& 0.498 $\pm$ 0.01
	& 0.491 $\pm$ 0.01
	& 0.571 $\pm$ 0.01
	& 0.650 $\pm$ 0.00
	\\
	\method
	& \textbf{0.618} $\pm$ 0.03
	& \textbf{0.621} $\pm$ 0.01
	& \textbf{0.602} $\pm$ 0.06
	& \textbf{0.773} $\pm$ 0.02
	& \textbf{0.611} $\pm$ 0.01
	& \textbf{0.676} $\pm$ 0.00
	\\
	\end{tabular}
	}
	\vspace{-1em}
\label{tab:new-hyp3}
\end{table*}

\begin{table*}
	\centering
	\caption{Balanced accuracy for the \textbf{Hyperedge Detection} task over subgraphs of size $k=4$. We report mean and standard deviation over five runs.}
	\scalebox{0.7}{
	\begin{tabular}{*{1}{l}*{6}{c}}
		\textbf{Method} &  \multicolumn{1}{c}{\textbf{Cora}} & \multicolumn{1}{c}{\textbf{Citeseer}} & \multicolumn{1}{c}{\textbf{Pubmed}} & \multicolumn{1}{c}{\textbf{DBLP}} & \multicolumn{1}{c}{\textbf{Steam}} & \multicolumn{1}{c}{\textbf{Rent the Runway}} \\
		
		{}   & $k=4$  & $k=4$ & $k=4$  & $k=4$  & $k=4$  & $k=4$ \\
		
	\toprule
		
	GS-mean$^\text{\citep{Hamilton2017}}$
	& 0.450 $\pm$ 0.11
	& 0.544 $\pm$ 0.03
	& 0.524 $\pm$ 0.05
	& 0.530 $\pm$ 0.15
	& 0.640 $\pm$ 0.03
	& 0.851 $\pm$ 0.00
	\\	
	GS-max$^\text{\citep{Hamilton2017}}$
& 0.462 $\pm$ 0.09
& 0.538 $\pm$ 0.04
& 0.558 $\pm$ 0.05
& 0.511 $\pm$ 0.14
& 0.688 $\pm$ 0.01
& 0.855 $\pm$ 0.00
\\
	GS-lstm$^\text{\citep{Hamilton2017}}$ 
	& 0.444 $\pm$ 0.09
	& 0.536 $\pm$ 0.04
	& 0.566 $\pm$ 0.06
	& 0.653 $\pm$ 0.02
	& 0.504 $\pm$ 0.01
	& 0.546 $\pm$ 0.03
	\\
	DGI$^\text{\citep{velickovic2018graph}}$
	& 0.463 $\pm$ 0.10
	& 0.526 $\pm$ 0.04
	&  0.549 $\pm$ 0.06
	& 0.500 $\pm$ 0.00
	& 0.664 $\pm$ 0.02
	& 0.749 $\pm$ 0.03
	\\
	Raw Features
	& 0.529 $\pm$ 0.01 
	& 0.581 $\pm$ 0.00
	& 0.498 $\pm$ 0.02
	& 0.558 $\pm$ 0.01
	& 0.535 $\pm$ 0.01
	& 0.857 $\pm$ 0.00
	\\  \cmidrule(lr){1-1}
	\method (Rnd)
	& 0.490 $\pm$ 0.10
	& 0.478 $\pm$ 0.03
	& 0.510 $\pm$ 0.02
	& 0.492 $\pm$ 0.02
	& 0.679 $\pm$ 0.01
	& 0.832 $\pm$ 0.01
	\\
	\method
	& \textbf{0.575} $\pm$ 0.03
	& \textbf{0.659} $\pm$ 0.08
	& \textbf{0.701} $\pm$ 0.10
	& \textbf{0.740} $\pm$ 0.05
	& \textbf{0.750} $\pm$ 0.00
	& \textbf{0.860} $\pm$ 0.00
	\\
	\end{tabular}
	}
	\vspace{-0.5em}
\label{tab:new-hyp4}
\end{table*}

\begin{table*}[ht]
\vspace{-0.5em}
	\centering
	\caption{Balanced accuracy for the \textbf{DAG Leaf Counting} task over subgraphs of size $k=3$ and $k=4$. We report mean and standard deviation over five runs.}
	\scalebox{0.7}{
		\begin{tabular}{*{1}{l}*{6}{c}}
			\textbf{Method} &  \multicolumn{2}{c}{\textbf{Cora}} & \multicolumn{2}{c}{\textbf{Citeseer}} & \multicolumn{2}{c}{\textbf{Pubmed}}  \\
			
			{}   & $k=3$  & $k=4$  & $k=3$  & $k=4$ & $k=3$  & $k=4$ \\
			\toprule
			
			GS-mean$^\text{\citep{Hamilton2017}}$
			& 0.468 $\pm$ 0.05 
			& 0.245 $\pm$ 0.06
			& 0.492 $\pm$ 0.04  
			& 0.356 $\pm$ 0.02
			& 0.502 $\pm$ 0.00
			& 0.384 $\pm$ 0.03
			\\	
			GS-max$^\text{\citep{Hamilton2017}}$
			& 0.467 $\pm$ 0.06 
			& 0.245 $\pm$ 0.07
			& 0.486 $\pm$ 0.04 
			& 0.347 $\pm$ 0.01
			&  0.499 $\pm$ 0.00  
			& 0.371 $\pm$ 0.03
			\\
			GS-lstm$^\text{\citep{Hamilton2017}}$
			
			& 0.473 $\pm$ 0.05
			& 0.263 $\pm$ 0.07 
			& 0.482 $\pm$ 0.04
			& 0.348 $\pm$ 0.01 
			& 0.507 $\pm$ 0.02
			& 0.372  $\pm$ 0.03   
			\\
			DGI$^\text{\citep{velickovic2018graph}}$
			& 0.478 $\pm$ 0.05 
			& 0.278 $\pm$ 0.07 
			& 0.504 $\pm$ 0.06  
			& 0.350 $\pm$ 0.02 
			& 0.505 $\pm$ 0.02
			&  0.362 $\pm$ 0.03
			\\
			Raw Features
			& 0.501 $\pm$ 0.01  
			& 0.325 $\pm$ 0.00
			& 0.567 $\pm$ 0.00   
			& 0.380 $\pm$ 0.00
			& 0.503 $\pm$ 0.00
			& 0.339 $\pm$ 0.00
			\\  \cmidrule(lr){1-1}
			\method (Rnd)
			& 0.497 $\pm$ 0.00	
			& 0.327 $\pm$ 0.00
			& 0.518 $\pm$ 0.04
			& 0.319 $\pm$ 0.01
			& 0.499 $\pm$ 0.01
			&  0.343 $\pm$ 0.01 
			\\
			\method
			& \textbf{0.593} $\pm$ 0.03	
			& \textbf{0.452} $\pm$ 0.03	
			& \textbf{0.606} $\pm$ 0.01	 
			& \textbf{0.469} $\pm$ 0.02
			& \textbf{0.626} $\pm$ 0.02
			& \textbf{0.475} $\pm$ 0.08
			\\
		\end{tabular}
	}
	\vspace{-1em}
\label{tab:dag}
\end{table*}

As opposed to node GNN representations and other non-compositional unsupervised graph representation approaches, \method does not take graph-wide information as input. Thus, it is natural to wonder to what extent pre-trained \method \highorder representations are informative of the entire graph to which they belong to. Hence, in the \Appendix we consider whole-graph classification as the downstream task. In this setting, we show how composing (by pooling) \method motif representations can perform better than non-compositional methods, further indicating how our learned motif representations can capture the underlying graph distribution $\pr(\bA,\mX;\bW)$.

\vspace{-0.08in} %
\section{Conclusions}
\vspace{-0.08in}
By combining hypergraph Markov networks, an unbiased finite-sample MCMC estimator, and graph representation learning, we introduced \method, a new scalable class of energy-based representation learning methods capable of learning \highorder representations over dyadic graphs in an \emph{inductive unsupervised} manner.
Finally, we show how pre-trained \method representations achieve more accurate results in downstream \highorder tasks. 
The energy-based optimization in this work allows for many extensions, such as designing different $k$-node subgraph representation learning methods, new subgraph function estimators for \method's loss function, and formulating new \highorder tasks. 
\section*{Broader Impact}
This work presents an unsupervised model together with a stochastic optimization procedure to generate $k$-node representations from graphs, such as online social networks, product networks, citation networks, coauthorship networks, etc.
As is the case with any learning algorithm, it is susceptible to produce biased representations if trained with biased data. Moreover, although the representations might be bias free, the downstream task defined by the user might be biased and thus, also produce biased decisions.

\section*{Acknowledgments}

This work was funded in part by the National Science Foundation (NSF) Awards CAREER IIS-1943364, CCF-1918483, and by
the ARO, under the U.S. Army Research Laboratory contract number W911NF-09-2-0053, the Purdue Integrative Data Science Initiative, the Purdue Research Foundation, and the Wabash Heartland Innovation Network.  Any
opinions, findings and conclusions or recommendations expressed in this material are those of the authors and do not necessarily reflect the views of the sponsors. Further, we would like to thank Mayank Kakodkar for his invaluable feedback and discussion on subgraph function estimation.

\FloatBarrier

\small{
\bibliography{refs,Ribeiro-pubs}
\bibliographystyle{apalike}
}

\appendix

\section{The Estimator \texorpdfstring{$\widehat{\Phi}(\textbf{A},
	\texorpdfstring{\bm{X}} \text{;} \textbf{W})$}{}}

\subsection{The \texorpdfstring{$k$-CNHON network}{}}

\begin{definition}[$k$-CNHON of $G$ given $\cI$, or $G^{(k,\cI)}$] \label{def:cnhon}
	Let $G^{(k)} = (V^{(k)},E^{(k)})$ be the higher-order network ($k$-HON) of the input graph $G$, where each node $v^{(k)} \in V^{(k)}$ corresponds to a $k$-node set $C \in \cC^{(k)}_{\text{conn}}$. For ease of understanding, we will levarege this correspondence and refer to nodes from $V^{(k)}$ with $k$-node sets from $\cC^{(k)}_{\text{conn}}$ interchangeably.  The edge set $E^{(k)}$ is defined such that $E^{(k)} = \{ (v^{(k)}_i, v^{(k)}_j) : v^{(k)}_i, v^{(k)}_j \in \cC^{(k)}_{\text{conn}} \text{ and } |v^{(k)}_i \cap  v^{(k)}_j| = k-1 \}$. Moreover, let $\cI$ be a set of k-nodes sets $\cI \subset \cC^{(k)}_{\text{conn}}$. Then, a $k$-CNHON $G^{(k,\cI)} = (V^{(k,\cI)}$, $E^{(k,\cI)}$) with supernode $v^{(k)}_\cI$ is a multigraph with node set $V^{(k,\cI)} = (V^{(k)} \backslash \cI ) \cup v^{(k)}_\cI$ 
	and edge multiset $E^{(k,\cI)} =  E^{(k)} \backslash (E^{(k)}\cap (\cI \times \cI)) \uplus  \{ (v^{(k)}_\cI, v^{(k)}) : \text{ } \exists \text{ } (u^{(k)}, v^{(k)}) \in E^{(k)}, u^{(k)} \in \cI \text{ and } v^{(k)} \notin \cI \}$, where $\uplus$ is the multiset union operation. 
\end{definition}

\subsection{ Proof of \texorpdfstring{\Cref{thm:estimator}}{}}

To prove \Cref{thm:estimator}, we assume that $G^{(k,\cI)}$ has a stationary distribution $\pi$ with 

$$\pi(C_i) = \frac{|N^{(k)}(C_i)|}{\sum_{C' \in V^{(k)}\backslash \cI} |N^{(k)}(C')|
	+ \sum_{u \in \cI} | N^{(k)}(u) \backslash \cI | } \text{ } \forall \text{ } C_i \in V^{(k,\cI)} \backslash \{v_{\cI}^{(k)}\} ,$$

and

$$ \pi(v_{\cI}^{(k)}) = \frac{\sum_{u \in \cI} | N^{(k)}(u) \backslash \cI | }{\sum_{C' \in V^{(k)}\backslash \cI} |N^{(k)}(C')| + \sum_{u \in \cI} | N^{(k)} (u) \backslash \cI |}.$$

Fortunately, \citet{wang2014efficiently} showed that such a statement is true whenever $\cI$ contains at least one $k$-node set from each connected component of $G$ and if each such component contains at least one vertex which is not a part of any $k$-node set in $\cI$ and is contained in more than 2 edges in $G$. 
First, we show that
the estimate $\widehat{\Phi}(\bA,\mX;\bW)$ of each tour is unbiased. 
\begin{lemma}\label{lem:unbias}
	Let $\cT_C^r = (C^r_{i})_{i=2}^{t^r}$ be a $k$-node set chain formed by the samples from the $r$-th RWT on $G^{(k,\cI)}$%
	starting at the supernode $v_{\cI}^{(k)}$. Then, $\forall r \geq 1$,
	\begin{equation}\label{eq:unbias}
	\begin{split}
	\mathbb{E}\Big[ \sum_{v \in \cI} \phi( \bA^{(v)}, \mX^{(v)}; \bW)
	+ \Big(\sum_{u \in \cI}|N^{(k)}(u) \backslash \cI|\Big)  \sum_{i=2}^{t^r} \frac{\phi( \bA^{(C^r_i)}, \mX^{(C^r_i)}; \bW )}{|N^{(k)}(C^r_i)|}\Big] 
	= \Phi(\bA,\mX;\bW),
	\end{split}
	\end{equation}
	
	assuming $\Phi(\bA,\mX;\bW)$ with zero constant.

\end{lemma}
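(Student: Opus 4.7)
The plan is to invoke the classical cycle (regenerative) identity for Markov chains: if an irreducible positive-recurrent chain on state space $S$ with stationary distribution $\pi$ starts at $s_0$ and $T$ is its first return time to $s_0$, then for every $f\colon S \to \mathbb{R}$,
\begin{equation*}
\mathbb{E}\!\left[\sum_{i=0}^{T-1} f(X_i)\right] = \frac{1}{\pi(s_0)} \sum_{v \in S} \pi(v)\, f(v).
\end{equation*}
I would apply this to the simple random walk on the multigraph $G^{(k,\cI)}$, taking $s_0 := v_\cI^{(k)}$ and using the degree-proportional stationary distribution $\pi$ displayed just above the lemma (valid under the conditions on $\cI$ imported from \citet{wang2014efficiently}).

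The first step is to choose the test function so that the cycle identity reproduces the random part of \Cref{eq:unbias}. Set $f(v_\cI^{(k)}) := 0$ and $f(C) := \phi(\bA^{(C)}, \mX^{(C)}; \bW)/|N^{(k)}(C)|$ for every non-supernode state $C \in V^{(k,\cI)} \setminus \{v_\cI^{(k)}\}$. Identifying $X_0 := v^r_1, X_1 := v^r_2, \ldots, X_{t^r-1} := v^r_{t^r}$ (so $T = t^r$), the left-hand side of the cycle identity becomes $\mathbb{E}\bigl[\sum_{i=1}^{t^r} f(v^r_i)\bigr]$, which equals $\mathbb{E}\bigl[\sum_{i=2}^{t^r} f(v^r_i)\bigr]$ since $f$ vanishes at the supernode $v^r_1$.

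The second step is the degree cancellation. Plugging in $\pi(C) = |N^{(k)}(C)|/Z$ for non-supernode $C$ and $\pi(v_\cI^{(k)}) = Z^{-1} \sum_{u \in \cI}|N^{(k)}(u)\setminus\cI|$, where $Z$ is the common normalizer stated above the lemma, the factor $|N^{(k)}(C)|$ in $\pi(C)$ cancels the $1/|N^{(k)}(C)|$ in $f(C)$, so $\sum_v \pi(v)\, f(v) = Z^{-1} \sum_{C \in V^{(k)}\setminus \cI} \phi(\bA^{(C)}, \mX^{(C)}; \bW)$. Dividing by $\pi(v_\cI^{(k)})$ eliminates $Z$, yielding $\mathbb{E}\bigl[\sum_{i=2}^{t^r} f(v^r_i)\bigr] = \bigl(\sum_{u\in\cI}|N^{(k)}(u)\setminus\cI|\bigr)^{-1} \sum_{C \in V^{(k)}\setminus \cI} \phi(\bA^{(C)}, \mX^{(C)}; \bW)$. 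Multiplying through by the prefactor in \Cref{eq:unbias} and adding the deterministic supernode contribution $\sum_{v \in \cI} \phi(\bA^{(v)}, \mX^{(v)}; \bW)$ extends the sum over all of $V^{(k)}$, which by \Cref{eq:PhiConn} (with the constant taken to be zero) equals $\Phi(\bA,\mX;\bW)$.

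The main obstacle is not the algebra, which is a direct degree cancellation, but rather verifying that the cycle identity applies: the random walk on $G^{(k,\cI)}$ must be irreducible with finite expected return time to the supernode, and $\pi$ must have the stated degree-proportional form. Both follow from the conditions on $\cI$ already imported from \citet{wang2014efficiently} (which guarantee connectivity of $G^{(k,\cI)}$), together with the standard fact that a simple random walk on a finite connected undirected multigraph is positive recurrent with $\pi$ proportional to degree. Once these are in place the interchange of expectation and the (almost-surely finite) tour-sum is licensed, and the rest of the argument is as outlined.
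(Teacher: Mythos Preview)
Your proposal is correct and follows essentially the same route as the paper. The paper unpacks the cycle identity you invoke into its constituent pieces---writing the tour-sum via occupation counts $\mathbb{T}(C)$, applying the renewal reward/ergodic theorem to get $\mathbb{E}[\mathbb{T}(C)] = \pi(C)\,\mathbb{E}[t^r]$, and then Kac's formula $\mathbb{E}[t^r] = 1/\pi(v_\cI^{(k)})$---whereas you cite the identity wholesale; after that, both proofs carry out the identical degree cancellation and add back the deterministic supernode term.
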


\begin{proof} [Proof of \Cref{lem:unbias}]
	
	Let's first rewrite \Cref{eq:unbias} as
	
	\begin{equation}\label{eq:simple-unbias}
	\begin{split}
	\Big(\sum_{u \in \cI}|N^{(k)}(u) \backslash \cI|\Big)  \mathbb{E}\Big[  \sum_{i=2}^{t^r} \frac{\phi( \bA^{(C^r_i)}, \mX^{(C^r_i)}; \bW )}{|N^{(k)}(C^r_i)|}\Big] 
	= \Phi(\bA,\mX;\bW) - \sum_{v \in \cI} \phi( \bA^{(v)}, \mX^{(v)}; \bW).
	\end{split}
	\end{equation}
	
	Since the RWT starts at node $v_{\cI}^{(k)}$, we may rewrite the expected value in \Cref{eq:simple-unbias} as
	\begin{equation}\label{eq:unbias2}
	\begin{split}
	\mathbb{E}\left[\sum_{i=2}^{t^{r}} \frac{\phi( \bA^{(C^r_i)}, \mX^{(C^r_i)}; \bW )}{|N^{(k)}(C^r_i)|} \right] 
	= \sum_{C_i \in \cC^{(k)}_{\text{conn}} \backslash \cI} \mathbb{E} \left[ \mathbb{\mT}(C_i) \frac{\phi( \bA^{(C_i)}, \mX^{(C_i)}; \bW )}{|N^{(k)}(C_i)|} \right],
	\end{split}
	\end{equation} 
	where $\mathbb{\mT}(C)$ represents the number of times the RWT reaches state $C$.

	Consider a renewal reward process with inter-renewal time distributed as $t^r$, $r \geq 1$ and reward as $\mathbb{\mT}(C^r_i)$. Further, note that the chain is positive recurrent, thus $\mathbb{E}[t^r] < \infty$, $\mathbb{E}[\mathbb{\mT}(C^r_i)] < \infty$ and $\mathbb{\mT}(C^r_i) < \infty$. Then, from the renewal reward theorem and the ergodic theorem~\citep{bremaud2013markov} we have
	
	$$\pi(C^r_i) = \mathbb{E}[t^r]^{-1} \mathbb{E}[\mathbb{\mT}(C^r_i)].$$
	
	Moreover, it follows from Kac's formula~\citep{aldous1995reversible} that $ \mathbb{E}[t^r] = \frac{1}{\pi(v_{\cI}^{(k)})}$.
	Therefore, \Cref{eq:unbias2} can be rewritten as
	\begin{equation}\label{eq:unbias3}
	\begin{split}
	\mathbb{E}\left[\sum_{i=2}^{t^{r}} \frac{\phi( \bA^{(C^r_i)}, \mX^{(C^r_i)}; \bW )}{|N^{(k)}(C^r_i)|} \right]
	= \sum_{C_i \in \cC^{(k)}_{\text{conn}} \backslash \cI} \frac{ \pi(C_i) \phi( \bA^{(C_i)}, \mX^{(C_i)}; \bW )}{\pi(v_{\cI}^{(k)})|N^{(k)}(C_i)|}.
	\end{split}
	\end{equation}
	Now, knowing the stationary distribution of $G^{(k,\cI)}$, we may simplify \Cref{eq:unbias3}
	to
	\begin{equation}\label{eq:unbias4}
	\begin{split}
	\mathbb{E}\left[\sum_{i=2}^{t^{r}} \frac{\phi( \bA^{(C^r_i)}, \mX^{(C^r_i)}; \bW )}{|N^{(k)}(C^r_i)|} \right]  
	=\frac{1}{\sum_{u \in \cI} | N^{(k)}(u) \backslash \cI |}\sum_{C_i \in \cC^{(k)}_{\text{conn}} \backslash \cI} \phi( \bA^{(C_i)}, \mX^{(C_i)}; \bW ),
	\end{split}
	\end{equation}
	and replace it in \Cref{eq:simple-unbias}, concluding our proof.
\end{proof}

\begin{proof}[Proof of \Cref{thm:estimator}]
	By \Cref{lem:unbias}, linearity of expectation and knowing that each RWT is independent from the other tours by the Strong Markov Property, \Cref{thm:estimator} holds.
\end{proof}

\section{Discussion of \method properties}

\paragraph{Conditional independence.} Although HMNs factorize distributions, the potentials themselves do not provide information on conditional and marginal distributions. Rather, we need to analyze how every pair of variables interacts through all potentials. For the sake of simplicity, consider the model described in Definition \ref{def:model} for undirected simple graphs, \textit{i.e.} $\mA_{ij}=\mA_{ji} \text{ } \forall \text{ } (i,j) \in V^2, \bA_{ii} = 0 \text{ } \forall \text{ } i \in V $. If we set $k=2$, each hyperedge will contain exactly one edge variable and two node variables, which is equivalent to assuming all edges are independent given their nodes' representations. Thus, for $k=2$ \method can recover edge-based models where representations don't use graph-wide information. Furthermore, if we allow the node representation to take graph-wide information, we can recover the recent Graph Neural Networks approaches~\cite{Hamilton2017,kipf2016variational,bojchevski2018deep}. If we opt for $k=3$, a hyperedge defined by nodes $i,j,l$ will contain the set of edge variables $\{\bA_{ij},\bA_{il},\bA_{jl}\}$ and node variables $\{ \mX_{i,\cdot},\mX_{j,\cdot},\mX_{l,\cdot}  \}$. Thus, a hyperedge will encompass only edge variables that share one endpoint. In this case, an edge variable $\bA_{ij}$ is independent from $\{\bA_{lm} : l,m \in V , \{l,m\} \cap \{i,j\} = \emptyset \} $ others given $ \{ \bA_{il} : l \in V  \} \cup \{ \bA_{im} : m \in V \} \cup \{ \mX_{i,\cdot} : i \in V \}$ . Thus, \method with $k=3$ can be cast as an instance of the Markov random graphs class proposed by~\citet{frank1986markov}. With $k \geq 4$, for every pair of edge variables $\bA_{ij},\bA_{lm}$ there exists at least one $C \in \cC^{(k)}$ such that $i,j,l,m \subseteq C$. Thus, there exists at least one hyperedge covering every pair of edge variables in the model, resulting in a fully connected hypergraph Markov Network. Therefore, for $k \geq 4$ the model does not assume any conditional independence between edge variables which, since subgraphs share edge variables, is a vital feature for \highorder representations of graphs.
\paragraph{Exchangeability.}
Although with infinite data and an arbitrary energy function $\phi(\cdot,\cdot;\bW)$ \method would learn a jointly exchangeable~\citep{orbanz2014bayesian} distribution, we would like to impose such condition on the model, defining a proper graph model. Equivalently, we would like to guarantee that any two isomorphic graphs have the same probability under \method. By definition, the sets of subgraphs from two isomorphic graphs are equivalent under graph isomorphism. Thus, if the subgraph energy function $\phi(\cdot,\cdot;\bW)$ is jointly exchangeable, the set of subgraph energies from two isomorphic graphs are equivalent. Since the sum operation is permutation invariant and the partition function is a constant, a jointly exchangeable subgraph energy function $\phi(\cdot,\cdot;\bW)$, such as a GNN, is enough to make \method jointly exchangeable.

\paragraph{Exponential Random Graph Models (ERGMs).}

The form of \method presented in \Cref{def:model} resembles the general and classical expression of Exponential Random Graph Models (ERGMs)~\cite{kolaczyk2014statistical}. Indeed, as any energy-based network model, we can cast ours as an ERGM where the sufficient statistics are given by all $k$-size subgraphs. However, we do stress how any exchangeable graph model has a correspondent ERGM representation~\cite{lauritzen2018random}, even when it is not as clear as it in \method.

\section{Additional Experiments and Implementation Details from \texorpdfstring{\Cref{sec:exp}}{}}

\subsection{Results for \texorpdfstring{$k=5$}{}}

Here, we extend the results from \Cref{sec:exp} to a $k=5$ setting in \Cref{tab:dag5} and \cref{tab:hyp5}. Due to the lack of papers with 5 authors (less than 10), we were not able to extend them to the DBLP dataset. Moreover, the conclusions from \Cref{sec:exp} also hold here. That is, \method consistently outperforms the baselines. However, on Rent the Runway we see the raw features achieving the highest performance. That is, structural information does not seem to be relevant to this specific task. Nevertheless, we still see that \method and GraphSAGE are the methods able to perform the task similarly to the raw features.

\begin{table*}[ht]
	\centering
	\caption{Balanced accuracy for the \textbf{Hyperedge detection} task over subgraphs of size $k=5$. We report mean and standard deviation over five runs.}
	\scalebox{0.75}{
		\begin{tabular}{*{1}{l}*{5}{c}}
			\textbf{Method} &  \multicolumn{1}{c}{\textbf{Cora}} & \multicolumn{1}{c}{\textbf{Citeseer}} & \multicolumn{1}{c}{\textbf{Pubmed}}  & \multicolumn{1}{c}{\textbf{Steam}} & \multicolumn{1}{c}{\textbf{Rent the Runway}}   \\
			
			{}   & $k=5$  & $k=5$  & $k=5$ & $k=5$ & $k=5$ \\
			\toprule
			
			GS-mean$^\text{\citep{Hamilton2017}}$
			&  0.447 $\pm$ 0.10
			&  0.530 $\pm$ 0.03
			&  0.697 $\pm$ 0.08
			&  0.696 $\pm$ 0.07
			&  0.933 $\pm$ 0.00
			\\	
			GS-max$^\text{\citep{Hamilton2017}}$
			& 0.384 $\pm$ 0.09
			& 0.543 $\pm$ 0.08
			& 0.722 $\pm$ 0.06
			& 0.765 $\pm$ 0.03
			& 0.940 $\pm$ 0.00
			\\
			GS-lstm$^\text{\citep{Hamilton2017}}$
			
			& 0.422 $\pm$ 0.04
			& 0.525 $\pm$ 0.03 
			& 0.736 $\pm$ 0.08
			& 0.532 $\pm$ 0.05
			& 0.557 $\pm$ 0.05
			\\
			DGI$^\text{\citep{velickovic2018graph}}$
			& 0.504 $\pm$ 0.00
			& 0.500 $\pm$ 0.00 
			& 0.500 $\pm$ 0.00
			& 0.626 $\pm$ 0.11
			& 0.827 $\pm$ 0.04
			\\
			Raw Features
			& 0.500 $\pm$ 0.00
			& 0.513 $\pm$ 0.00
			& 0.526 $\pm$ 0.00
			& 0.602 $\pm$ 0.00
			& \textbf{0.944} $\pm$ 0.00
			\\  \cmidrule(lr){1-1}
			\method (Rnd)
			&  0.460 $\pm$ 0.05
			&  0.453 $\pm$ 0.03
			&  0.493 $\pm$ 0.07
			&  0.748 $\pm$ 0.02
			&  0.924 $\pm$ 0.00
			\\
			\method
			& \textbf{0.543} $\pm$ 0.06
			& \textbf{0.703} $\pm$ 0.04	
			&  \textbf{0.815} $\pm$ 0.10
			& \textbf{0.823} $\pm$ 0.00
			& 0.943 $\pm$ 0.01
			\\
		\end{tabular}
	}
\label{tab:hyp5}
\end{table*}

\begin{table*}[ht]
	\centering
	\caption{Balanced accuracy for the \textbf{DAG Leaf Counting} task over subgraphs of size $k=5$. We report mean and standard deviation over five runs.}
	\scalebox{0.75}{
		\begin{tabular}{*{1}{l}*{3}{c}}
			\textbf{Method} &  \multicolumn{1}{c}{\textbf{Cora}} & \multicolumn{1}{c}{\textbf{Citeseer}} & \multicolumn{1}{c}{\textbf{Pubmed}}  \\
			
			{}   & $k=5$  & $k=5$  & $k=5$ \\
			\toprule
			
			GS-mean$^\text{\citep{Hamilton2017}}$
			& 0.223 $\pm$ 0.04
			& 0.259 $\pm$ 0.02
			& 0.284 $\pm$ 0.02
			\\	
			GS-max$^\text{\citep{Hamilton2017}}$
			& 0.150 $\pm$ 0.07
			& 0.263 $\pm$ 0.01
			&  0.288 $\pm$ 0.02
			\\
			GS-lstm$^\text{\citep{Hamilton2017}}$
			
			& 0.214 $\pm$ 0.03
			& 0.259 $\pm$ 0.00 
			&  0.295 $\pm$ 0.04
			\\
			DGI$^\text{\citep{velickovic2018graph}}$
			& 0.236 $\pm$ 0.02
			& 0.249 $\pm$ 0.00 
			& 0.249 $\pm$ 0.00
			\\
			Raw Features
			& 0.251 $\pm$ 0.05
			& 0.266 $\pm$ 0.00
			&  0.290 $\pm$ 0.00
			\\  \cmidrule(lr){1-1}
			\method (Rnd)
			& 0.231 $\pm$ 0.01
			& 0.277 $\pm$ 0.02
			& 0.244 $\pm$ 0.01
			\\
			\method
			& \textbf{0.363}  $\pm$ 0.04
			& \textbf{0.364} $\pm$ 0.02
			& \textbf{0.330} $\pm$ 0.04
			\\
		\end{tabular}
	}
\label{tab:dag5}
\end{table*}

\subsection{Hyperparameters and Hyperparameter Search for \method}

All \method models were implemented in PyTorch~\citep{paszke2019pytorch} and PyTorch Geometric~\citep{fey2019} with the Adam optimizer~\citep{KingmaB14}. All hyperparameters were chosen to minimize training loss. For learning rate, we searched in \{0.01, 0.001, 0.0001\} finding the best learning rate to be 0.001 for all models. We used a single hidden layer feedforward network with LeakyReLU activations for both $\rho$ and $\text{READOUT}$ functions in all models. Furthermore, following GraphSAGE~\citet{Hamilton2017}, for all models we do an L2 normalization in the motif representation layer, \textit{i.e.} in the output of the $\text{READOUT}$ function. Finally, for all models we use $M=1$ negative example for each positive example. %
In what follows, we give specific hyperparameters and their search for experiments from \Cref{sec:exp}, show results for transductive baselines, and introduce new whole-graph downstream tasks together with their specific hyperparameters and search as well.

\subsection{Pre-trained HMH-GNN for \texorpdfstring{$k$-node downstream tasks (\Cref{sec:exp})}{}} 

{\bf \method architecture.} The energy function of \method is as described in \Cref{eq:energy}, where we use a one-hidden layer feedforward network with LeakyReLU activations as $\rho$, a row-wise sum followed by also a one-hidden layer feedforward network with LeakyReLU activations as the $\text{READOUT}$ function and a single layer GraphSAGE-mean~\citet{Hamilton2017} as the GNN, except for $k=5$ in the citation networks where we used two layers of the GraphSAGE-mean GNN to achieve faster convergence in training.

\textbf{Subsampling positive examples.} We use positive examples subsampled with Forest Fire~\citep{leskovec2006sampling} of size 100 for Cora, Citeseer and DBLP datasets, while for Pubmed, a larger network, we use examples of size 500. For Steam, a smaller network, we use 75 and for Rent the Runway, a mid-size network we use 150.

\textbf{Number of tours.} We did 80 tours for all datasets except Pubmed with $k=4$, which due to a larger $k$-CNHON network, we did 120 tours. A small number of tours will result in high variance in the gradient which, as we observed, tends to impair the learning process. Therefore, we tested training models, each with a different fix number of tours, starting with 1 tour and increasing it 10 by 10 until we reached the reported number of tours, which results in training loss convergence.

\textbf{Supernode size.} To construct the supernode, we do a BFS on the $k$-HON of the original input graph, similarly to ~\citet{teixeira2018graph}. We have a parameter that controls the maximum number of subgraphs visited by the BFS, which we call supernode budget. This parameter was set to 100K for Pubmed with $k=3$ and $k=4$, 5K for Cora with $k=3$ and $k=4$, Citeseer with $k=3$ and DBLP with $k=3$, 10K for Citeseer with $k=4$ and 50K for DBLP with $k=4$. For Steam, we set to 1K for $k=3$ and to 10K for $k=4$. For Rent the Runway, we set to 10K for $k=3$ and to 30K for $k=4$. For $k=5$, we used 50K in Cora, 75K in Citeseer, 120K in Pubmed, 50K in Steam and 100K in Rent the Runway. In the same way of tours, we started with a small supernode budget of 100 and increased it by 100 until we observed the tours being completed and the training loss converging.

\textbf{Minibatch size.} We used a minibatch size of 50 for Cora, Citeseer and Steam with $k=3$ and 25 for Cora and Citesser with $k=4$. For Pubmed, Rent the Runway and DBLP, larger networks, we used minibatches of size 40 for $k=3$ and 10 for $k=4$. For Steam, we used 20 for $k=4$. Again, we tested small minibatch sizes, increasing them until we had training loss convergence and GPU memory space to use. For $k=5$, we used a minibatch of size 5 in all datasets.

\subsubsection{Transductive baselines} 
Since we defined the tasks from \Cref{sec:exp} over single graphs in the citation and couathorship networks, in \Cref{tab:transhyp3,tab:transhyp4,tab:transhyp5}, and \Cref{tab:transdag3,tab:transdag4,tab:transdag5} we show for those datasets results for two prominent transductive node embedding methods, node2vec~\citep{grover2016node2vec} and DeepWalk~\citep{perozzi2014deepwalk} together with concatenating the raw features to them, evidencing how even in transductive settings, transductive node embeddings fail to capture \highorder relationships in most settings, performing similarly to the inductive approaches to node representations, thus, performing consistently worse than our \method \highorder representations.

\begin{table*}
	\caption{Results for transductive baselines in the \textbf{Hyperedge Detection} task over $k=3$, $k=4$ and $k=5$ size subgraphs.}
	\begin{subtable}[ht]{\textwidth}
		\centering
		\scalebox{0.75}{
			\begin{tabular}{lcccc}
				\textbf{Method} & \textbf{Cora} & \textbf{Citeseer} & \textbf{Pubmed} & \textbf{DBLP} \\
				{}  & $k=3$ & $k=3$  & $k=3$  & $k=3$ \\
				\toprule
				node2vec$^\text{\citep{grover2016node2vec}}$               & 0.534 $\pm$ 0.04 & 0.525 $\pm$ 0.02 & 0.501 $\pm$ 0.00 &  0.461 $\pm$ 0.05  \\
				node2vec$^\text{\citep{grover2016node2vec}}$   + Features             & 0.545 $\pm$ 0.01 & 0.534 $\pm$ 0.01 & 0.500 $\pm$ 0.00  & 0.479 $\pm$ 0.04  \\
				DeepWalk$^\text{\citep{perozzi2014deepwalk}}$             & 0.472 $\pm$ 0.02 &  0.433 $\pm$ 0.01 &0.499 $\pm$ 0.00   & 0.481 $\pm$ 0.00  \\
				DeepWalk$^\text{\citep{perozzi2014deepwalk}}$  + Features	& 0.512 $\pm$ 0.01 & 0.591 $\pm$ 0.01 & 0.502 $\pm$ 0.00 & 0.485 $\pm$ 0.02 \\	%
			\end{tabular}
			}
		\caption{$(k=3)$ Balanced accuracy for the \textbf{Hyperedge Detection} task over subgraphs of size $k=3$. We report mean and standard deviation over five runs.}
		\label{tab:transhyp3}
	\end{subtable}
	\begin{subtable}[ht]{\textwidth}
		\centering
		\scalebox{0.75}{
			\begin{tabular}{lcccc}
				\textbf{Method} & \textbf{Cora} & \textbf{Citeseer} & \textbf{Pubmed} & \textbf{DBLP} \\ 
				{}  & $k=4$ & $k=4$  & $k=4$  & $k=4$ \\
				\toprule
				node2vec$^\text{\citep{grover2016node2vec}}$               &  0.537 $\pm$ 0.04 & 0.513 $\pm$ 0.03 & 0.504 $\pm$ 0.01 & 0.405 $\pm$ 0.01  \\
				node2vec$^\text{\citep{grover2016node2vec}}$   + Features             & 0.626 $\pm$ 0.03 &0.540 $\pm$ 0.01 & 0.502 $\pm$ 0.00 & 0.548 $\pm$ 0.10 \\
				DeepWalk$^\text{\citep{perozzi2014deepwalk}}$             & 0.515 $\pm$ 0.07 & 0.494 $\pm$ 0.10  & 0.504 $\pm$ 0.01   &  0.460 $\pm$ 0.01  \\
				DeepWalk$^\text{\citep{perozzi2014deepwalk}}$  + Features	& 0.597 $\pm$ 0.05 & 0.570 $\pm$ 0.01 & 0.516 $\pm$ 0.01 & 0.560 $\pm$ 0.03 \\		%
			\end{tabular}
			}
		\caption{($k=4$) Balanced accuracy for the \textbf{Hyperedge Detection} task over subgraphs of size $k=4$. We report mean and standard deviation over five runs.\\
			~}
		\label{tab:transhyp4}
	\end{subtable}
	\begin{subtable}[ht]{\textwidth}
		\centering
		\scalebox{0.75}{
			\begin{tabular}{lcccc}
				\textbf{Method} & \textbf{Cora} & \textbf{Citeseer} & \textbf{Pubmed}  \\ 
				{}  & $k=5$ & $k=5$  & $k=5$ \\
				\toprule
				node2vec$^\text{\citep{grover2016node2vec}}$               & 0.446  $\pm$ 0.08 & 0.544 $\pm$ 0.08  & 0.623  $\pm$ 0.13   \\
				node2vec$^\text{\citep{grover2016node2vec}}$   + Features             &  0.519 $\pm$ 0.00 & 0.500 $\pm$ 0.00  & 0.502 $\pm$ 0.01  \\
				DeepWalk$^\text{\citep{perozzi2014deepwalk}}$             & 0.446 $\pm$ 0.07 & 0.568 $\pm$ 0.05 & 0.568 $\pm$ 0.13   \\
				DeepWalk$^\text{\citep{perozzi2014deepwalk}}$  + Features	& 0.490 $\pm$ 0.01  & 0.523 $\pm$ 0.01  & 0.472 $\pm$ 0.11 		%
			\end{tabular}
			}
		\caption{($k=5$) Balanced accuracy for the \textbf{Hyperedge Detection} task over subgraphs of size $k=5$. We report mean and standard deviation over five runs.\\
			~}
		\label{tab:transhyp5}
	\end{subtable}
\end{table*}

\begin{table*}[ht]
\caption{Results for transductive baselines in the \textbf{DAG Leaf Counting} task over $k=3$, $k=4$ and $k=5$ size subgraphs.}
	\begin{subtable}[ht]{\textwidth}
		\centering
		\scalebox{0.75}{
			\begin{tabular}{lcccc}
				\textbf{Method} & \textbf{Cora} & \textbf{Citeseer} & \textbf{Pubmed} \\ 
				{}  & $k=3$ & $k=3$  & $k=3$ \\
				\toprule
				node2vec$^\text{\citep{grover2016node2vec}}$               & 0.538 $\pm$ 0.05 & 0.546 $\pm$ 0.03 & 0.502 $\pm$ 0.01 \\
				node2vec$^\text{\citep{grover2016node2vec}}$   + Features             & 0.556 $\pm$ 0.02  & 0.527 $\pm$ 0.01 & 0.501 $\pm$ 0.00 \\
				DeepWalk$^\text{\citep{perozzi2014deepwalk}}$             & 0.466 $\pm$ 0.02 & 0.503 $\pm$ 0.06 & 0.499 $\pm$  0.00   \\
				DeepWalk$^\text{\citep{perozzi2014deepwalk}}$  + Features	&  0.543 $\pm$ 0.01 & 0.584 $\pm$ 0.00 & 0.503 $\pm$ 0.00 \\			%
			\end{tabular}
			}
		\caption{($k=3$) Balanced accuracy for the \textbf{DAG Leaf Counting} task over subgraphs of size $k=3$. We report mean and standard deviation over five runs.}
		\label{tab:transdag3}
	\end{subtable}
	\begin{subtable}[ht]{\textwidth}
		\centering
		\scalebox{0.75}{
			\begin{tabular}{lcccc}
				\textbf{Method} & \textbf{Cora} & \textbf{Citeseer} & \textbf{Pubmed} \\ 
				{}  & $k=4$ & $k=4$  & $k=4$ \\
				\toprule
				node2vec$^\text{\citep{grover2016node2vec}}$               & 0.374 $\pm$ 0.06 & 0.329 $\pm$ 0.04 & 0.333 $\pm$ 0.00 \\
				node2vec$^\text{\citep{grover2016node2vec}}$   + Features             & 0.410 $\pm$ 0.04  & 0.388 $\pm$ 0.00 & 0.339  $\pm$ 0.00  \\
				DeepWalk$^\text{\citep{perozzi2014deepwalk}}$             & 0.322 $\pm$ 0.00 & 0.349 $\pm$ 0.04 &0.339 $\pm$ 0.00    \\
				DeepWalk$^\text{\citep{perozzi2014deepwalk}}$  + Features	& 0.349 $\pm$ 0.00 & 0.381 $\pm$ 0.00  & 0.345 $\pm$ 0.00 \\			%
			\end{tabular}
			}
		\caption{($k=4$) Balanced Accuracy for the \textbf{DAG Leaf Counting} task over subgraphs of size $k=4$. We report mean and standard deviation over five runs.}
		\label{tab:transdag4}
	\end{subtable}
	
	\begin{subtable}[ht]{\textwidth}
		\centering
		\scalebox{0.75}{
			\begin{tabular}{lcccc}
				\textbf{Method} & \textbf{Cora} & \textbf{Citeseer} & \textbf{Pubmed} \\ 
				{}  & $k=5$ & $k=5$  & $k=5$ \\
				\toprule
				node2vec$^\text{\citep{grover2016node2vec}}$               & 0.265  $\pm$ 0.05  & 0.262 $\pm$ 0.03 & 0.298 $\pm$ 0.03  \\
				node2vec$^\text{\citep{grover2016node2vec}}$   + Features             & 0.263 $\pm$ 0.01 & 0.240 $\pm$ 0.02  & 0.259 $\pm$ 0.01   \\
				DeepWalk$^\text{\citep{perozzi2014deepwalk}}$             & 0.254 $\pm$ 0.02 & 0.240 $\pm$ 0.01  & 0.238 $\pm$ 0.05    \\
				DeepWalk$^\text{\citep{perozzi2014deepwalk}}$  + Features	&  0.255 $\pm$ 0.00 & 0.269 $\pm$ 0.00  & 0.269 $\pm$ 0.01 \\			%
			\end{tabular}
			}
		\caption{($k=5$) Balanced Accuracy for the \textbf{DAG Leaf Counting} task over subgraphs of size $k=5$. We report mean and standard deviation over five runs.}
		\label{tab:transdag5}
	\end{subtable}
	
\end{table*}

\subsection{Pre-trained MHM-GNN representations for whole-graph downstream tasks}\label{sec:graphs}
In \Cref{sec:exp}, we have seen that the motif representations learned by \method can better predict hyperedge properties than existing unsupervised GNN representations. 
In the following experiments we investigate: Are \method motif representations capturing graph-wide information (learning $\pr(\bA,\mX;\bW)$)? To this end, inspired by \citet{nairhinton2008}'s evaluation of RBM representations through supervised learning, we now investigate if \method's pre-trained motif representations can do similarly or better than non-compositional methods that take graph-wide information in (inductive) whole-graph classification.%

\textbf{Datasets.} We use four multiple graphs datasets, namely PROTEINS, ENZYMES, IMDB-BINARY and IMDB-MULTI~\citep{yanardag2015deep,KKMMN2016}. We are interested in evaluating whole-graph representations under two different scenarios, one where the nodes have high-dimensional feature vectors and the other where the nodes do not have features. To this end, we chose the two biological networks PROTEINS and ENZYMES, where nodes contain feature vectors of size 32 and 21 respectively and the social networks IMDB-BINARY and IMDB-MULTI where nodes do not have features. More details in Section D of this \Appendix.

\textbf{Training the model.} Since we have multiple graphs in our datasets, our set of positive graph examples is already given in the data, unlike in \Cref{sec:exp}, where we had to subsample positives from a single graph. 
The negative examples still need to be sampled.
For the biological networks, we used the same negative sampling approach used in \Cref{sec:exp}. %
For the social networks, where the nodes do not have features, for each positive example, we uniformly at random add $n$ edges to it, generating a negative sample  (where $n$ is the number of nodes in the graph). %

\textbf{Experimental setup.} 
We equally divide the graphs in each dataset between training (unsupervised) and training+testing (supervised). 
We use two thirds of the graphs in the supervised dataset to train a logistic classifier for the downstream task over the graph's representation. 
We use a third of the supervised dataset to test the method's accuracy. 
The classification tasks used here are the same as in ~\citet{borgwardt2005protein} and ~\citet{xu2018how}. 
Again, we set the representation dimension of both \method and our baselines to 128. 
We show results for $k=3,4,5$ motifs representations, $k=n$ whole-graph representations, and unsupervised GNN node representations. 
To create these representations, we tested both sum and mean pooling for \method (except $k=n$) and all the node-based baselines. We report the best performance of each for a fair comparison.

\begin{table}[ht]
\caption{Results for the whole-graph classification task evaluated over balanced accuracy. We report mean and standrad deviation over five runs.}
	\centering
	\scalebox{.75}{		\begin{tabular}{lrrrr}
			\textbf{Method} & \textbf{PROTEINS} & \textbf{ENZYMES} & \textbf{IMDB-BIN.} & \textbf{IMDB-MULT} \\ \hline
			GS-mean$^\text{\citep{Hamilton2017}}$  &  0.753 $\pm$ 0.01 & 0.435 $\pm$ 0.02  & 0.454 $\pm$ 0.01  & 0.347 $\pm$ 0.01 \\
			GS-max$^\text{\citep{Hamilton2017}}$   & 0.729 $\pm$ 0.01 &  0.400 $\pm$ 0.04 &  0.447 $\pm$ 0.01 & 0.360 $\pm$ 0.01 \\
			GS-lstm$^\text{\citep{Hamilton2017}}$  &   0.739 $\pm$ 0.01 & 0.404 $\pm$ 0.04 & 0.442 $\pm$ 0.00 & 0.342 $\pm$ 0.01 \\
			DGI (Nodes)$^\text{\citep{velickovic2018graph}}$	&  0.743 $\pm$ 0.02 & 0.349 $\pm$ 0.04 &  $ 0.469 \pm$ 0.00 & 0.367 $\pm$ 0.02 \\
			DGI (Joint)$^\text{\citep{velickovic2018graph}}$ 	&  0.756 $\pm$ 0.00 & 0.263  $\pm$ 0.03  &  0.568 $\pm$ 0.03 & 0.376 $\pm$ 0.01 \\
			Raw Features  &  0.665 $\pm$ 0.05 & 0.210 $\pm$ 0.02  & --   &  -- \\
			NetLSD$^\text{\citep{tsitsulin2018netlsd}}$    & 0.760 $\pm$ 0.00	& 0.250 $\pm$ 0.00 & 0.550 $\pm$ 0.00 & 0.430 $\pm$ 0.01\\
			graph2vec$^\text{\citep{narayanan2017graph2vec}}$   & 0.685  $\pm$ 0.00	& 0.166 $\pm$ 0.00 & 0.507 $\pm$ 0.00  & 0.335 $\pm$ 0.00 \\
			InfoGraph$^\text{\citep{sun2020}}$   & 0.690 $\pm$ 0.04 	& 0.278 $\pm$ 0.04  & \textbf{0.691} $\pm$ 0.04   & \textbf{0.466} $\pm$ 0.02 \\
			\cmidrule(lr){1-1}
			\method (Rnd) ($k=3$)    %
			& 0.733 $\pm$ 0.01 & 0.293 $\pm$ 0.02	 & 0.586 $\pm$ 0.00 & 0.369 $\pm$ 0.001\\
			\method    ($k=3$)          &\textbf{0.777} $\pm$ 0.01	& \textbf{0.445} $\pm$ 0.01 & 0.586 $\pm$ 0.00 & 0.376 $\pm$ 0.00 \\
			\cmidrule(lr){1-1}
			\method (Rnd) ($k$=4)    %
			&  0.720 $\pm$ 0.02 & 0.229 $\pm$ 0.04	 & 0.580 $\pm$ 0.00 & 0.371 $\pm$ 0.00 \\
			\method   ($k=4$)          & \textbf{0.780} $\pm$ 0.02	&  0.390 $\pm$ 0.04 & 0.621 $\pm$ 0.00 & 0.390 $\pm$ 0.002 \\
			\cmidrule(lr){1-1}
			\method (Rnd) ($k=5$)    %
			& 0.722  $\pm$ 0.01 &  0.213 $\pm$   0.03	 & 0.580 $\pm$ 0.00 & 0.378 $\pm$ 0.005 \\
			\method    ($k=5$)         & \textbf{0.773} $\pm$ 0.01	&  0.326 $\pm$ 0.04 & 0.600 $\pm$ 0.01 & 0.397 $\pm$ 0.001 \\			
			\cmidrule(lr){1-1}
			\method (Rnd)  ($k=n$)    %
			& 0.704 $\pm$ 0.03 & 0.266 $\pm$ 0.02 & \textbf{0.707} $\pm$ 0.02 & \textbf{0.446} $\pm$ 0.005 \\
			\method       ($k=n$)      & 0.753 $\pm$ 0.00	& 0.327 $\pm$ 0.01 & \textbf{0.694} $\pm$ 0.02 & \textbf{0.451} $\pm$ 0.01\\
		\end{tabular}
		\label{tab:graphtasks}
	}
	\end{table}

\textbf{Baselines.} We compare \method against \emph{non-compositional methods}: pooling node representations from GraphSAGE and DGI, directly pooling node features, two recent whole-graph embedding methods, NetLSD~\citep{tsitsulin2018netlsd} and graph2vec~\citep{narayanan2017graph2vec} and a recent unsuperved whole-graph representation, InfoGraph~\citep{sun2020}. Apart from pooling node features, all methods input graph-wide information to their representations. Pooling node features is not applicable to the social networks, since they do not have such information. Additionally, DGI also generates a whole-graph representation to minimize the mutual entropy with the nodes' representations. Note how by setting $k=n$, we consider the entire graph as a single motif and thus, learn a whole-graph representation. Again, all models were trained according to their original implementation.

\vspace{0.15in}
\textbf{Results.} We show in \Cref{tab:graphtasks} the results for whole-graph classification downstream tasks. For each task and each model, we report the mean and the standard deviation of the balanced accuracy (mean recall of each class) achieved by logistic regression over five different runs. We observe how our method consistently outperforms representations computed over the entire graph: the joint DGI approach, graph2vec and node representations pooling. Interestingly, we observe that when the graph has high-dimensional feature vectors of the nodes, pooling small motif representations better generalizes than all other methods to unseen graphs. On the other hand, we observe that using a joint whole-graph representation, either with $k=n$ in our model or with NetLSD or with InfoGRAPH, can perform better without node features. In fact, there is no significant difference between using a random and a trained model for the joint representation. It is known how a random GNN model simply assigns a unique representation to each class of graphs indistinguishable under the 1-WL test~\citep{xu2018how}. Therefore, for graphs without node features, assigning unique representations seems to be the best in this setting, which means that the tested graph embedding and unsupervised representation methods are not really capturing significant graph information. Overall, we observe that indeed motif representations are capable of representing the entire graph to which they belong and even give better results, evidencing how \method is learning graph-wide information, \textit{i.e.} capturing $\pr(\bA,\mX;\bW)$ and how motif compositionality can explain networks functionality.

{\bf \method architecture.} 
We use the same $\rho$ and $\text{READOUT}$ functions as in \Cref{sec:exp}, while changing the GNN to GIN~\citet{xu2018how} (which gave better validation results than the GAT, GCN, and GraphSAGE GNNs).
Again, we use $M=1$, \textit{i.e.}, we sample one negative example for each positive sample. We show results of \method for $k=3,4,5,n$. 
For the estimator $\widehat{\Phi}(\bA,\mX;\bW)$, we perform 30 tours for every model and dataset. 

\textbf{GNN layer.} We use a single-layer GIN~\citet{xu2018how} as the GNN layer in our method. For $k=n$, where the GNN is applied over large graphs, we used GIN with two layers. Note that we also tested GraphSAGE-mean, GCN and GAT GNN layers here, but GIN resulted in faster training loss convergence.

\textbf{Number of tours.} We did 30 tours for all datasets. Again, we tested training models, each with a different fix number of tours, starting with 1 tour and increasing 10 by 10 until we reached the reported number of tours, which results in training loss convergence.

\textbf{Supernode size.} We did a BFS with the maximum number of subgraphs visited as 5K for all models (and all $k$). Again, we started with a small supernode budget of 100 and increased it by 100 until we observed the tours being completed and the training loss converging.

\textbf{Minibatch size.} We used a minibatch size of 50 for ENZYMES and PROTEINS for all reported $k$. For IMDB-BINARY and IMDB-MULTI, which have larger networks we used a minibatch size of 10. Again, we tested small minibatch sizes and increased until we had training loss convergence and GPU memory to use.

\textbf{Pooling functions.} We tested both sum and mean pooling motif (our model) and node (baselines) representations for all models here. We observed that mean pooling performs the best for all models in all datasets, except for the ENZYMES dataset, where sum pooling performed the best for all models. Thus, \Cref{tab:graphtasks} contain results with mean pooling for all models in the PROTEINS, IMDB-BINANRY and IMDB-MULTI datasets and sum pooling for all models in the ENZYMES dataset.

\section{Datasets}

We present the datasets statistics in \Cref{tab:citation} and \Cref{tab:graphs}. For the PROTEINS and ENZYMES datasets, we added the node labels as part of the node features. For the DBLP, we subsampled (with Forest Fire) the original large network from~\citet{yadati2019hypergcn}. For the Steam graphs, we consider user-product relations from 2014 to create the training graph and data from 2015 to create the test graph. Similarly, we use 2016 data to create the Rent the Runway training graph and 2017 data to create the test graph. For both product networks, the node features we created are sparse bag-of-words from the user text reviews.

\bgroup
\def\arraystretch{1.5}
\begin{table}[ht]
	\centering
	\caption{Single graph datasets statistics.}
	\scalebox{0.75}{
		\begin{tabular}{llrrr}
			\textbf{Dataset}  & \textbf{Type}  & \textbf{Nodes}  & \textbf{Edges}  & \textbf{Features} \\ \hline
			Cora~\citep{sen2008collective}    & Citation Network & 2,708  & 5,429  & 1,433   \\
			Citeseer~\citep{sen2008collective} & Citation Network & 3,327  & 4,732  & 3,703    \\
			Pubmed~\citep{sen2008collective}   & Citation Network & 19,717 & 44,338 & 500    \\
			DBLP~\citep{yadati2019hypergcn}   & Coauthorship Network & 4,309 & 12,863 & 1,425 \\
			Steam~\citep{pathak2017generating} (Train)  & Product Network & 1,098 & 7,839 & 775 \\
			Steam~\citep{pathak2017generating} (Test)  & Product Network & 1,322 & 7,547 & 775
			\\
			Rent the Runway~\citep{misra2018decomposing} (Train)  & Product Network & 2,985  & 55,979  & 1,475 \\
			Rent the Runway~\citep{misra2018decomposing} (Test)  & Product Network & 5,003  & 67,365 & 1,475
		\end{tabular}
	}
	\vspace{0.2in}
	\label{tab:citation}
\end{table}

\bgroup
\def\arraystretch{1.5}
\begin{table}[ht]
	\centering
	\caption{Multiple graphs datasets statistics.}
	\scalebox{0.75}{
		\begin{tabular}{llrrr}
			\textbf{Dataset}  & \textbf{Type}  & \textbf{Graphs}  & \textbf{Features} & \textbf{Classes} \\ \hline
			PROTEINS~\citep{KKMMN2016}     & Biological Network & 1,113  & 32  & 2   \\
			ENZYMES~\citep{KKMMN2016} & Biological Network & 600  & 21  & 6   \\
			IMDB-BINARY~\citep{KKMMN2016}   & Social Network & 1,000 & 0 & 2   \\
			IMDB-MULTI~\citep{KKMMN2016}   & Social Network & 1,500 & 0 & 3			
		\end{tabular}
	}
	\vspace{-0.1in}
	\label{tab:graphs}
\end{table}

\section{Related Work: Higher-order Graph Representations}\label{sec:rw2}

In what follows, we review the existing approaches to higher-order graph representations in literature.

\textbf{Higher-order graph representations.} Morris \textit{et. al}~\citep{morris2019weisfeiler} showed how to expand the concept of a GNN, an approach based on the 1-WL algorithm~\citep{weisfeiler1968reduction}, to a $k$-GNN, an approach based on the class of $k$-WL~\citep{cai1992optimal} algorithms, where instead of generating node representations, one can derive higher-order ($k$-size) representations later used to represent the entire graph. Although such approaches to represent entire graphs have been recently used in \textit{supervised} graph classification tasks, how to systematically use them in an {inductive unsupervised} manner was not clear. Since edge-based models require factorizing over a 2-node representation, %
only 1-WL~\citep{kipf2017semi,xu2018how,Hamilton2017,velivckovic2017graph} and 2-WL~\citep{morris2019weisfeiler}-based GNNs can be used. Additionally, $k$-GNNs can be thought of as a GNN over an extended graph, where nodes are $k$-node tuples and edges exist between $k$-tuples that share exactly $k-1$ nodes. One could indeed think of applying an edge-based loss to the extended graph, where the nodes ($k$-node tuples) representations are given by a $k$-GNN. However, an edge-based model assumes independence among edges and an edge in the extended graph is repeated several times in the extended graphs, thus they are not independent. Finally, even if one could provide an unsupervised objective to $k$-GNNs, it would still require $\mathcal{O}(n^k(k \delta)L)$ steps to compute an $L$-layer $k$-GNN over a graph with $n$ nodes and maximum degree $\delta$. Due to the non-linearities in the READOUT function and in the neighborhood aggregations in $k$-GNNs, unbiased subgraph estimators such as the one presented in this work and neighborhood sampling technique such as the one from \citet{Hamilton2017} would not provide an unbiased or a bounded loss estimation such as \method does. Moreover, the more recent sparser version of $k$-GNNs~\citep{morris2020} uses $k$-node tuple representations, instead of $k$-node subgraph represenations as in the original paper. Finally, \method can take advantage of any graph representation method, including $k$-GNNs~\citep{morris2019weisfeiler} and non-GNN approaches such the ones presented in Relational Pooling~\citep{murphy2019relational}. %

\textbf{Sum-based subgraph representations.} There has been recent work representing subgraphs by equating them with sets of node representations~\cite{HamiltonSurvey}. In general, these approaches use graph models able to generate node representations and then add a module on top to aggregate these individual representations in the downstream task. The most prominent efforts have treated subgraph representations as sums of the individual nodes' representations~\cite{HamiltonSurvey}, namely sum-based techniques. These approaches do not rely on joint subgraph representations, \textit{i.e.} subgraphs that share nodes will tend to have similar representations, constraining their representational power and thus relying more on the downstream task model.

\textbf{Hypergraph models.}
In this work, we wish to learn a graph model through motif representations in the presence of standard dyadic (graph) data, \textit{i.e.} we are only observing pairwise relationships. Therefore, we emphasize that hypergraph models, despite dealing with higher-order representations of graphs, require observing polyadic (hypergraph) data and therefore are not an alternative to the problem studied here. 

\textbf{Supervised learning with subgraphs.} \citet{jason2018SPNN} made the first effort towards supervised learning  with subgraphs, where the authors predict higher-order properties from temporal dyadic data, as opposed to the problem presented here, where we are are interested in \textit{inductive unsupervised} learning of $k$-node sets from static graphs. Moreover, Meng \textit{et. al} learned subgraph properties while optimizing a pseudo-likelihood function, \textit{i.e.} ignoring the dependencies among different subgraphs in the loss function. Because different node sets share edge variables, it is vital to learn dependencies among them. Hence, here we presented the first graph model based on $k$-size motif structures trained with a proper Noise-Contrastive Estimation function, \textit{i.e.} our model accounts for dependencies between every edge to represent $k$-size node sets. 

\end{document}